%% file: colm2026_conference.tex
\documentclass{article} 
\usepackage[final]{colm2026_conference}

\usepackage{microtype}
\usepackage{hyperref}
\usepackage{url}
\usepackage{booktabs}
\usepackage{multirow}
\usepackage{graphicx}

\usepackage{lineno}
\usepackage{colortbl}
\definecolor{darkblue}{rgb}{0, 0, 0.5}
\hypersetup{colorlinks=true, citecolor=darkblue, linkcolor=darkblue, urlcolor=darkblue}

\usepackage{algorithm}
\usepackage{algpseudocode}
\usepackage{amsmath,amssymb,amsfonts}
\usepackage{amsthm}
\usepackage{hyperref}
\usepackage{cleveref}
\newtheorem{theorem}{Theorem}
\newtheorem{proposition}[theorem]{Proposition}

\newtheorem{lemma}[theorem]{Lemma}
\newtheorem{definition}[theorem]{Definition}

\newtheorem{remark}{Remark}

\usepackage{wrapfig}
\usepackage{xcolor}
\usepackage[table]{xcolor}

\crefname{theorem}{Theorem}{Theorems}
\crefname{proposition}{Proposition}{Propositions}
\crefname{corollary}{Corollary}{Corollaries}
\crefname{assumption}{Assumption}{Assumptions}
\crefname{definition}{Definition}{Definitions}
\crefname{algorithm}{Algorithm}{Algorithms}
\numberwithin{theorem}{section}
\numberwithin{assumption}{section}

\newcommand{\repair}{\textsc{Repair}}
\usepackage{subcaption}

\usepackage{fontawesome5}
\usepackage{hyperref}
\input{math_commands}

\title{\textsc{Beyond Logit Adjustment:} A Residual Decomposition Framework for Long-Tailed Reranking}
\author{Zhanliang Wang$^{1,2}$, \;
Hongzhuo Chen$^{1,2}$, \;
Quan Minh Nguyen$^{1,2}$, \\
\textbf{Mian Umair Ahsan$^{2}$, \;
Kai Wang$^{1,2}$}\thanks{Corresponding author. Email: \texttt{wangk@chop.edu}} \\[4pt]
{\small $^1$ University of Pennsylvania, Philadelphia, PA, USA} \\
{\small $^2$ Children's Hospital of Philadelphia, Philadelphia, PA, USA}
}

\begin{document}

\ifcolmsubmission
\linenumbers
\fi

\maketitle
\begin{abstract}
Long-tailed classification, where a small number of frequent classes dominate many rare ones, remains challenging because models systematically favor frequent classes at inference time. Existing post-hoc methods such as logit adjustment address this by adding a fixed classwise offset to the base-model logits. However, the correction required to restore the relative ranking of two classes need not be constant across inputs, and a fixed offset cannot adapt to such variation. We study this problem through Bayes-optimal reranking on a base-model top-$k$ shortlist. The gap between the optimal score and the base score, the \emph{residual correction}, decomposes into a \emph{classwise} component that is constant within each class, and a \emph{pairwise} component that depends on the input and competing labels. When the residual is purely classwise, a fixed offset suffices to recover the Bayes-optimal ordering. We further show that when the same label pair induces incompatible ordering constraints across contexts, no fixed offset can achieve this recovery. This decomposition leads to testable predictions regarding when pairwise correction can improve performance and when cannot. We develop \repair{} (\textbf{Re}ranking via \textbf{Pair}wise residual correction), a lightweight post-hoc reranker that combines a shrinkage-stabilized classwise term with a linear pairwise term driven by competition features on the shortlist. Experiments on nine benchmarks confirm that the decomposition explains where pairwise correction helps and where classwise correction alone suffices. These span text classification, visual recognition, and multimodal rare-disease diagnosis.

\vspace{0.8em}
\noindent\textbf{Code:} \faGithub\ \url{https://github.com/WGLab/REPAIR}
\end{abstract}

\input{introduction}
\input{related_work}
\input{preliminaries}
\input{framework}
\input{algorithm}

\input{experiment}

\input{conclusion}
\section*{Acknowledgments}
This study is supported by NIH grant \emph{OD037960} and the Children's Hospital of Philadelphia (CHOP) Research Institute. We also gratefully acknowledge the Penn Advanced Research Computing Center (PARCC) for providing computational resources and support.

\section*{Ethics Statement}
This work applies post-hoc reranking to rare-disease diagnosis, a high-stakes clinical setting. REPAIR is designed as a decision-support tool that reorders a candidate shortlist; it does not make final diagnostic decisions. This study is approved by the institutional review board at the Children's Hospital of Philadelphia (CHOP) under protocol \emph{IRB 18-015712}. All datasets used in this work are publicly available.

\section*{Use of Generative AI} We used large language models (Claude Opus 4.6) to polish grammar and improve the clarity of our writing, to organize the structure and logical flow of the paper, and to check and refine the presentation of mathematical proofs. All scientific contributions, experimental design, and final content were produced and verified by the authors.
\bibliography{colm2026_conference}
\bibliographystyle{colm2026_conference}
\input{appendix}

\end{document}

%% file: math_commands.tex
\newcommand{\argmax}{\mathop{\mathrm{arg\,max}}}



%% file: introduction.tex
\section{Introduction}
\label{sec:intro}

Long-tailed classification, where a few common classes dominate many rare ones, remains a persistent challenge because models systematically favor frequent classes at prediction time, pushing rare labels down the ranking even when they are plausible.
A common post-hoc correction method is logit adjustment~\citep{menon2021longtail}, which adds a fixed class-dependent offset to the base-model scores. This is effective when rare classes are globally under-ranked due to bias introduced by training frequency, but long-tailed ranking errors are more complicated than a single classwise offset can express.

Consider an animal classifier trained on far more images of \emph{cat} than \emph{leopard} (Figure~\ref{fig:overview}). 
On a clear photograph showing a leopard's rosette pattern, the model assigns $56\%$ to cat and $44\%$ to leopard. On a distant, partially occluded shot of a leopard, the model gives $70\%$ to cat and only $30\% $to leopard. On a third image that genuinely depicts a cat, the model gives $67\%$ to cat and $33\%$ to leopard. No single fixed per-class offset works for all three: a value small enough to preserve the correct ranking on the third image cannot close the gap on the second, while a value large enough to correct the second inevitably flips the third. 
The root cause is data imbalance: with few training images of leopard, the base model's scores for this class are not only biased but highly variable across inputs, and this variability is exactly what a fixed offset cannot absorb.



\begin{figure}[t]
\centering
\includegraphics[
  width=1.0\textwidth,
  trim=0pt 0pt 0pt 0pt,  
  clip
]{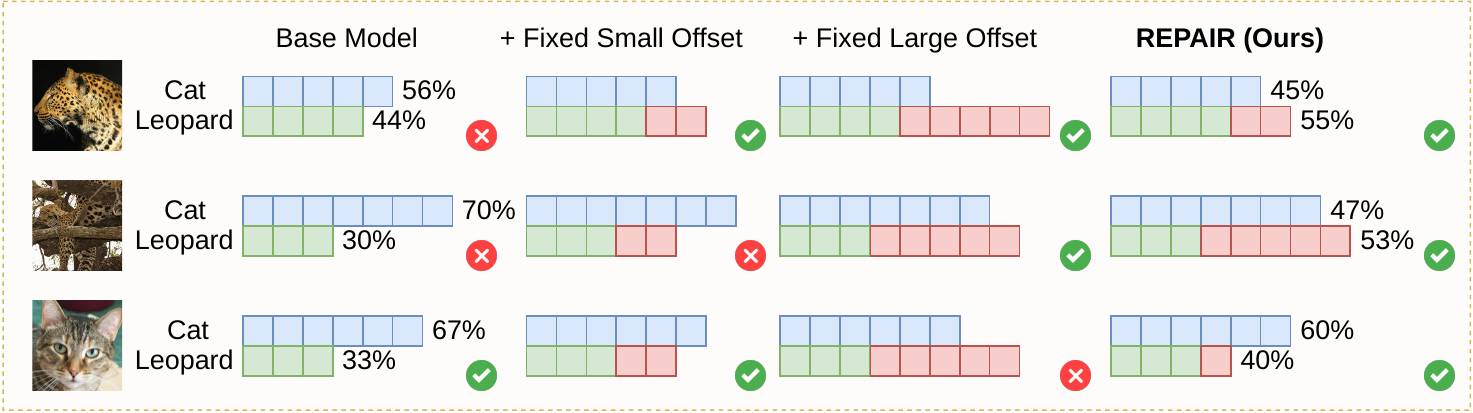}
\vspace{-8pt}  
\caption{Motivating example (shortlist size $2$ for clarity). Each row is a different input; 
bars show scores and percentages indicate softmax probabilities. A fixed small offset corrects the easy case (row~1) but under-corrects the hard case (row~2); a fixed large offset corrects both but flips the already-correct case (row~3). \repair{} adapts the correction magnitude per input, producing the correct ranking in all three cases.}
\vspace{-10pt}  
\label{fig:overview}
\end{figure}

We formalize this by studying Bayes-optimal reranking on a base-model shortlist. The gap between the optimal score and the base score, the \emph{residual correction}, decomposes into a \emph{classwise} component, which depends only on the label identity; and a pairwise component, which varies with the input and the competing labels; Section~\ref{sec:framework} characterizes precisely when the classwise component alone suffices and when it does not. Real datasets fall on a spectrum
between these extremes, and the degree of pairwise competition determines how much a
context-dependent correction can help. This raises a practical question: \emph{how should a reranker adapt its correction to the specific input and the competing labels on the shortlist?}
Our key insight is that the competition structure within the shortlist already encodes this information: how closely the base model scores the true label relative to each rival reflects how hard the current input is, and this relative strength varies across inputs, which is exactly the variation that a fixed offset misses.

Based on this analysis, we develop \textsc{Repair} (Reranking via Pairwise residual correction), a lightweight post-hoc reranker that combines a shrinkage-stabilized classwise term, which alone improves over logit adjustment by learning offsets on the shortlist, with a linear pairwise term driven by competition features. Both components are fitted jointly on covered held-out calibration examples; the base model is never modified.
We validate on nine long-tailed benchmarks spanning text classification, visual 
recognition, and rare-disease diagnosis. Consistent with the decomposition, the 
pairwise term yields small gains where the classwise correction already suffices, 
as on the vision benchmarks (ImageNet-LT~\citep{imagenetlt}, 
Places-LT~\citep{imagenetlt}, iNaturalist~\citep{vanhorn2018inaturalist}) and the 
class-separable text task GoEmotions~\citep{demszky2020goemotions}. Gains are substantially larger where it does not, as on the non-class-separable text tasks 
(MASSIVE~\citep{fitzgerald2023massive}, HuffPost-LT~\citep{misra2022news}, 
Few-NERD~\citep{ding2021fewnerd}) and rare-disease diagnosis on 
GMDB~\citep{lesmann2024gmdb} and its out-of-distribution variant 
RareBench~\citep{chen2024rarebench}).

\paragraph{Contributions.}
\begin{enumerate}
\item We decompose the residual correction on a fixed shortlist into classwise and pairwise components, proving that when the same label pair induces incompatible ordering constraints across inputs, no fixed per-class offset can recover the Bayes ordering (Theorem~\ref{thm:pairwise-necessary}).
\item We derive \textsc{Repair}, a lightweight post-hoc reranker from this decomposition, combining a learned classwise correction with a linear pairwise term.
\item We validate the framework's predictions on nine benchmarks: gains are proportional to the degree of pairwise competition and concentrated where the theory predicts fixed offsets to be the most fragile.
\end{enumerate}

%% file: related_work.tex
\section{Related Work}
\label{sec:related}

\paragraph{Long-tailed distribution.}
Real world class distributions are rarely uniform. Typically a few head classes dominate many rare ones, forming a long-tailed distribution~\citep{yang2022survey,zhang2023survey}. Models trained on such datasets are systematically biased towards frequent classes, which usually results in poor performance on the tail~\citep{ren2020balanced, menon2021longtail}. Existing solutions can be grouped into two main categories based on when the correction is applied: training-time methods and post-hoc methods.


\paragraph{Post-hoc score correction.}
Post-hoc methods adjust the output of a fixed pretrained model at inference time. Logit adjustment methods add a fixed per-class offset to correct the distribution shift~\citep{ren2020balanced,menon2021longtail,disentangling}. Weight normalization methods rescale the classifier weight norms to remove the correlation between norm magnitude and class frequency~\citep{kang2020decoupling,kim2020adjusting}. The correction for each class remains fixed, which is a common limitation shared by all these methods. Temperature scaling~\citep{guo2017calibration,xiao2025restoring} is also post-hoc but rank-preserving,
targeting calibration rather than ordering. A broader review of training-time methods is provided in Appendix~\ref{app:add_related}.

\paragraph{Rare-disease diagnosis.}
Many rare genetic disorders present distinctive facial features, and CNN-based systems~\citep{DeepGestalt, hsieh2022gestaltmatcher} can prioritize candidate diagnoses from facial photos across hundreds of syndromes. Recent multimodal approaches further integrate clinical notes and demographic information via large language models~\citep{gestaltmml, CoT, mint}, yet the extreme class imbalance inherent in rare disease settings remains a core challenge across all these systems.

%% file: preliminaries.tex
\section{Preliminaries}
\label{sec:prelim}
 
We consider a multiclass classification problem with input $X$, label $Y \in [K] = \{1,\dots,K\}$, and long-tailed class prior $\pi_y = P(Y = y)$.
A trained base model produces scores $g(X) = (g_1(X),\dots,g_K(X)) \in \mathbb{R}^K$.
 
\paragraph{Shortlist and coverage.}
Fix an integer $k \ll K$ and let $S(X) \subseteq [K]$ denote the deterministic top-$k$ shortlist under $g(X)$, with deterministic tie-breaking.
A \emph{reranker} selects a label $\hat{y}(X) \in S(X)$.
Write $C := \{Y \in S(X)\}$ for the \emph{coverage event}; every reranker satisfies $P(\hat{y}(X) = Y) \leq P(C) = \text{Recall@}k$, so shortlist recall is a hard ceiling on reranking accuracy.
Since reranking can only affect covered examples, we condition all subsequent analysis and evaluation metrics on~$C$.
 
\paragraph{Shortlist posterior.}
Once the shortlist is fixed, the relevant object for reranking is the posterior restricted to the shortlisted labels:
\begin{equation}
\label{eq:shortlist-posterior}
p_S(y \mid x) := P(Y = y \mid X = x,\, C), \qquad y \in S.
\end{equation}


The corresponding log-posterior,
\begin{equation}
\label{eq:bayes-score}
s^\star_y(x, S) := \log p_S(y \mid x),
\end{equation}
is the Bayes-optimal shortlist score: a standard pointwise argument (Appendix~\ref{proof:bayes-shortlist}) shows that the Bayes-optimal reranker selects $\hat{y}^\star(x) \in \arg\max_{y \in S(x)} p_S(y \mid x)$.
Throughout, only relative values of $s^\star_y$ across $y \in S$ matter for ranking.
A generic classwise reranker modifies the base score as $g_y(x) + a_y$ for a fixed per-class offset~$a_y$; prior-based logit adjustment is the special choice $a_y = -\tau \log \pi_y$.
The next section discusses when this classwise form is sufficient and when it is not.

%% file: framework.tex
\section{A Residual Correction Framework for Long-Tailed Reranking}
\label{sec:framework}

Once a base model returns a fixed shortlist, the first question is what score a Bayes-optimal reranker should target.
The second is how the base score differs from this target.
That gap defines a \emph{residual correction}.
We then ask when this correction can be represented by a fixed classwise term, and when context-dependent competition makes a pairwise term necessary.

\subsection{Residual correction and pairwise gaps}
\label{sec:residual}

For each shortlisted label $y \in S$, define the \emph{residual correction}
\begin{equation}
\label{eq:residual}
\beta^\star_y(x,S) \;:=\; s^\star_y(x,S) \;-\; g_y(x).
\end{equation}
Because ranking depends only on score \emph{differences}, the relevant object is the \emph{pairwise gap}
\begin{equation}
\label{eq:pairwise-gap}
\Delta_{uv}(x,S) \;:=\; \beta^\star_u(x,S) - \beta^\star_v(x,S) \;=\; \bigl(s^\star_u - s^\star_v\bigr) - \bigl(g_u(x) - g_v(x)\bigr), \qquad u,v \in S.
\end{equation}
The quantity $\Delta_{uv}(x,S)$ is invariant to any shortlist-shared additive constant in $\beta^\star$ and captures exactly the correction difference needed to recover the Bayes ordering of the pair $(u,v)$.

\begin{definition}[Class-separable and non-class-separable regimes]
\label{def:class-separable}
The setting is \emph{class-separable} if there exists a vector $a \in \mathbb{R}^K$ such that for every covered $(x,S)$ and every $u,v \in S$,
\begin{equation}
\label{eq:class-sep-gap}
\Delta_{uv}(x,S) = a_u - a_v.
\end{equation}
Equivalently, there exists a shortlist-shared scalar $c(x,S)$ such that $\beta^\star_y(x,S) = a_y + c(x,S)$ for all $y \in S$.
If no such vector $a$ exists, the setting is \emph{non-class-separable}.
\end{definition}

Intuitively, class-separable means that the ranking bias of each class can be summarized by a single number.
Prior-based logit adjustment assumes exactly this structure: the correction depends only on the training frequency of the class, not on which rivals appear or how the input looks.

\begin{proposition}[Classwise correction as an exact special case]
\label{prop:classwise-special}
If the setting is class-separable, then the fixed-offset score $r_y(x,S) = g_y(x) + a_y$ induces the Bayes-optimal ordering on every covered example.
\end{proposition}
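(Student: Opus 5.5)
The argument is a direct unwinding of the definitions: we show that, under class separability, the fixed-offset score $r_y(x,S)$ differs from the Bayes score $s^\star_y(x,S)$ only by a quantity that is shared across the shortlist. Fix a covered pair $(x,S)$. By the equivalent form in Definition~\ref{def:class-separable}, there is a scalar $c(x,S)$ with $\beta^\star_y(x,S) = a_y + c(x,S)$ for every $y \in S$. Substituting this into the definition of the residual in \eqref{eq:residual} gives
\begin{equation*}
s^\star_y(x,S) \;=\; g_y(x) + \beta^\star_y(x,S) \;=\; g_y(x) + a_y + c(x,S) \;=\; r_y(x,S) + c(x,S), \qquad y \in S.
\end{equation*}

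Next we turn this into a statement about pairwise order. For any $u,v \in S$, the shared constant cancels, so
\[
s^\star_u - s^\star_v = r_u - r_v .
\]
Equivalently, one can start from \eqref{eq:class-sep-gap}, rearrange \eqref{eq:pairwise-gap} as $s^\star_u - s^\star_v = (g_u - g_v) + \Delta_{uv}$, and substitute $\Delta_{uv} = a_u - a_v$. Either way, every pairwise comparison on $S$, including ties, has the same sign under $r$ as under $s^\star$. The map $y \mapsto r_y(x,S)$ therefore induces exactly the same weak order on $S$ as $s^\star_y(x,S) = \log p_S(y\mid x)$. In particular, $\arg\max_{y\in S} r_y = \arg\max_{y\in S} s^\star_y$. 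By the pointwise Bayes-optimality argument cited in Section~\ref{sec:prelim} (Appendix~\ref{proof:bayes-shortlist}), the selector $\arg\max_{y \in S} r_y$ is therefore a Bayes-optimal reranker on every covered example.

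There is no genuine obstacle here; the only points needing care are bookkeeping. First, $s^\star$ is defined only on covered examples and only for $y \in S$, so the identity above is claimed only there. Second, the offsets $a$ are indexed globally over $[K]$ but are evaluated only on shortlisted labels. Third, ties in $s^\star$ are reproduced exactly, so any tie-breaking rule applied to $r$ still selects an element of the Bayes argmax set.
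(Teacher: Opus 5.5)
Your proposal is correct and follows the paper's proof essentially verbatim: substitute $\beta^\star_y(x,S) = a_y + c(x,S)$ into the residual definition to get $s^\star_y(x,S) = g_y(x) + a_y + c(x,S)$, then note that the shortlist-shared constant $c(x,S)$ cannot change the ordering on $S$. Your additional remarks on ties and the alternative route via $\Delta_{uv} = a_u - a_v$ are correct, though not required.
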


This positions logit adjustment precisely: it is exact when the residual correction is class-separable up to a shortlist-shared constant.
Proof in Appendix~\ref{proof:classwise-special}.
The next subsection shows why this condition fails in general.

\subsection{Why a pairwise correction is necessary}
\label{sec:pairwise-necessary}

For classes $u,v \in [K]$, define the \emph{base-score threshold}
\begin{equation}
\label{eq:threshold}
t(x;\,u,v) \;:=\; g_v(x) - g_u(x).
\end{equation}
A fixed classwise model ranks $u$ above $v$ if and only if $a_u - a_v > t(x;\,u,v)$.
Thus a single cut $a_u - a_v$ must satisfy the requirements of \emph{all} contexts in which the pair $(u,v)$ appears.

\begin{theorem}[Necessity of pairwise correction]
\label{thm:pairwise-necessary}
Suppose there exist two covered contexts $(x,S)$ and $(x',S')$ and two labels $u,v \in S \cap S'$ such that the Bayes rule ranks $u$ above $v$ at $x$, ranks $v$ above $u$ at $x'$, and
\begin{equation}
\label{eq:threshold-crossing}
t(x;\,u,v) \;\geq\; t(x';\,u,v).
\end{equation}
Then no fixed offset vector $a \in \mathbb{R}^K$ can agree with the Bayes ordering in both contexts.
\end{theorem}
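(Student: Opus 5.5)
The argument is by contradiction, using only the characterization stated just before the theorem: a fixed classwise score $r_y(x,S)=g_y(x)+a_y$ ranks $u$ above $v$ at $x$ if and only if $a_u-a_v>t(x;u,v)$. Suppose some $a\in\mathbb{R}^K$ agrees with the Bayes ordering in both contexts, and write $\delta:=a_u-a_v$. Since $u,v\in S\cap S'$, both labels are scored in both contexts, and the same scalar $\delta$ governs the pair in each.

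First, translate the two Bayes requirements into inequalities on $\delta$. At $x$, the Bayes rule ranks $u$ above $v$, so agreement forces $g_u(x)+a_u>g_v(x)+a_v$, i.e.\ $\delta>t(x;u,v)$. At $x'$, the Bayes rule ranks $v$ above $u$, so agreement forces $g_v(x')+a_v>g_u(x')+a_u$, i.e.\ $\delta<t(x';u,v)$. Chaining these gives $t(x;u,v)<\delta<t(x';u,v)$, hence $t(x;u,v)<t(x';u,v)$. This contradicts the threshold-crossing hypothesis \eqref{eq:threshold-crossing}, so no such $a$ exists.

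The only step needing care is how ties are treated, which is where I expect the main (minor) obstacle to lie. I will read ``ranks $u$ above $v$'' strictly, both for the Bayes rule and for the classwise score, which matches the strict inequality in the stated characterization. Under that reading the hypothesis $t(x)\ge t(x')$, including equality, already leaves no room for $\delta$. If ties in $r$ were instead resolved by the deterministic tie-breaking rule, one of the two inequalities could become non-strict. Even so, with a fixed tie-break between $u$ and $v$, a tie at $\delta=t(x)=t(x')$ is resolved the same way in both contexts. It therefore cannot produce the opposite orders the Bayes rule demands, and the conclusion survives. I will add a one-line remark to this effect.

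Finally, I will note that the argument is invariant to shortlist-shared constants. Such a constant would cancel in $\delta$, so the result holds equally for corrections of the form $a_y+c(x,S)$, consistent with Definition~\ref{def:class-separable}.
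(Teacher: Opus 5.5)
Your proof is correct and follows the paper's own argument essentially line for line. Set $\delta = a_u - a_v$. Agreement at $x$ forces $\delta > t(x;u,v)$, and agreement at $x'$ forces $\delta < t(x';u,v)$. Chaining these gives $t(x;u,v) < t(x';u,v)$, which contradicts the threshold-crossing hypothesis. Your tie-breaking remark and your note on shortlist-shared constants are valid additions but not needed for the result: with any fixed tie-break, at most one of the two inequalities becomes non-strict, so the chained inequality is still strict.
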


Theorem~\ref{thm:pairwise-necessary} identifies the precise point at which fixed classwise correction breaks down.
Input-dependent pairwise variation alone does not force failure; the critical event is a \emph{threshold crossing} for the same pair across contexts.
We call such a pair \emph{contradictory}.
A setting may be non-class-separable without containing a contradictory pair, but once such a pair appears, no fixed classwise offset can agree with the Bayes ordering in all relevant contexts. In practice, a dataset can have large per-class score variance yet still be fully correctable by a fixed offset, as long as the variation does not reverse the ordering of any pair.
Proof in Appendix~\ref{proof:pairwise-necessary}. 

\paragraph{Threshold dispersion.}
Theorem~\ref{thm:pairwise-necessary} shows that threshold 
crossings drive the failure of fixed offsets. To quantify how 
prone a class is to such crossings, we define
\begin{equation}
\label{eq:threshold-dispersion}
D_y(S) \;:=\; \max_{j \in S \setminus \{y\}} 
\mathrm{std}\bigl(t(x;\,y,j)\bigr),
\end{equation}
where the standard deviation is computed across all covered 
contexts in which both $y$ and $j$ appear in the shortlist. 
Large $D_y$ means the true label faces at least one rival whose 
score gap fluctuates widely across inputs: some inputs are easy 
to distinguish, others are not, and no single fixed offset can 
handle both. In Section~\ref{sec:experiments}, we compute the 
mean threshold dispersion per rare class across all its covered 
test examples, then sort rare classes into five equal-sized 
quintiles (Q1--Q5, lowest to highest; we call these \emph{contradiction quintiles} since high $D_y$ is a necessary condition for threshold crossings) and show that the gain of 
\repair{} over classwise-only increases from Q1 to Q5.
\subsection{Measuring reranking gains}
\label{sec:rho}

Since all evaluation is conditioned on coverage, we normalize reranking gains by the recoverable gap within the shortlist:
\begin{equation}
\label{eq:gap-closed}
\rho_k(\hat{y}) \;:=\; \frac{\mathrm{Hit@1}(\hat{y}) \;-\; \mathrm{Hit@1}(\hat{y}_{\mathrm{base}})}{1 \;-\; \mathrm{Hit@1}(\hat{y}_{\mathrm{base}})},
\end{equation}
where $\mathrm{Hit@1}(\hat{y}) := P(\hat{y}(X) = Y \mid C)$.
The denominator is the fraction of covered examples the base model fails to rank first; $\rho_k = 1$ means the reranker corrects all such failures.
We use $\rho_k$ primarily in the synthetic validation below; for real-data experiments, we report Hit@1 and HFR directly since they are more interpretable across datasets with different coverage rates.
The equivalent unconditional form is given in Appendix~\ref{sec:rho-unconditional}.

\subsection{Synthetic validation}
\label{sec:synthetic}

To verify the predictions of Proposition~\ref{prop:classwise-special} and Theorem~\ref{thm:pairwise-necessary}, we construct two controlled settings: one where the residual is class-separable, and one where confuser pairs induce threshold crossings (setup in Appendix~\ref{sec:toy-details}).
Figure~\ref{fig:regime} confirms both predictions: in the class-separable setting, classwise reranking and REPAIR achieve the same $\rho_k$ ($= 0.17$); in the non-class-separable setting with contradictory pairs, REPAIR closes substantially more of the recoverable gap ($\rho_k = 0.42$ vs.\ $0.18$). 

\begin{figure}[!ht]
\centering
\includegraphics[width=\textwidth]{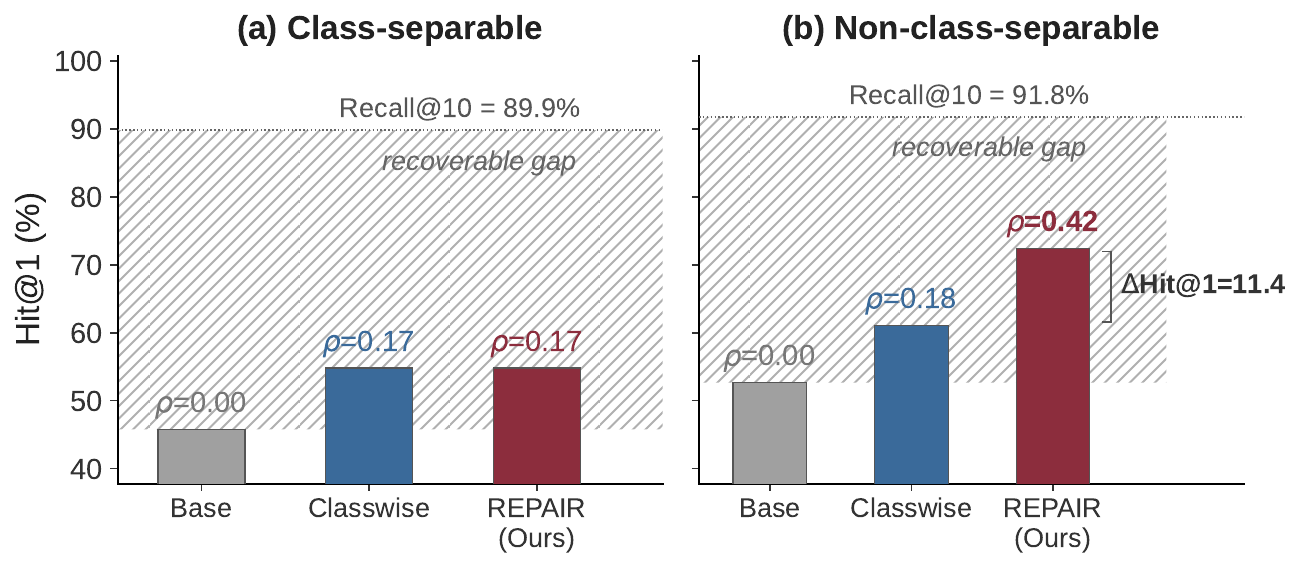}
\vspace{-6pt}
\caption{Synthetic validation ($K{=}100$, $k{=}10$).
\textbf{(a)}~Class-separable regime: classwise correction suffices ($\rho_k = 0.17$ for both), confirming Proposition~\ref{prop:classwise-special}.
\textbf{(b)}~Non-class-separable regime with contradictory pairs: \repair{} closes $2.3\times$ the recoverable gap ($\rho_k = 0.42$ vs.\ $0.18$), consistent with Theorem~\ref{thm:pairwise-necessary}.}
\label{fig:regime}
\end{figure}

%% file: algorithm.tex
\section{Algorithm}
\label{sec:algorithm}

Theorem~\ref{thm:pairwise-necessary} identifies the missing degree of freedom: the correction for a label cannot depend only on the label identity; it must also vary with the competing labels on the shortlist. More precisely, the excess risk of any plug-in reranker is controlled by how well it estimates the shortlist posterior (Appendix~\ref{proof:plugin}), which in turn depends on approximating both the classwise and pairwise components of the residual. 
This section translates that structural insight into a concrete, lightweight algorithm.
\subsection{Score model}
\label{sec:score-model}

We model the reranker score as
\begin{equation}
\label{eq:score-model}
r_y(x,S) \;=\; g_y(x) \;+\; \underbrace{a_y}_{\text{classwise}}
\;+\; \underbrace{\ell_y(x,S)}_{\text{pairwise}}, \qquad y \in S,
\end{equation}
where $\hat{\beta}_y := a_y + \ell_y(x,S)$ is our estimate of the residual correction $\beta^\star_y$ (Eq.~\ref{eq:residual}).
The classwise term $a_y$ targets the class-separable component of $\beta^\star_y$ (Definition~\ref{def:class-separable});
the special choice $a_y = -\tau\log\pi_y$ recovers logit adjustment, but we learn $a_y$ from covered calibration examples to obtain the strongest fixed-classwise correction.
Even without pairwise term, this 
learned classwise correction outperforms the closed-form logit adjustment offset on most datasets (Table~\ref{tab:main_combined}), because it captures per-class biases beyond frequency.
The pairwise term $\ell_y$ targets the remainder that fixed classwise corrections structurally miss (Theorem~\ref{thm:pairwise-necessary}).

The pairwise correction averages over all rivals in the shortlist:
\begin{equation}
\label{eq:pairwise-correction}
\ell_y(x,S) \;=\; \frac{1}{|S|-1} \sum_{j \in S \setminus \{y\}} \theta^\top \phi(x,y,j).
\end{equation}
Each shortlisted rival $j$ contributes a correction to label $y$ through the shared parameter vector~$\theta$ and the pairwise feature vector $\phi(x,y,j)$.

\paragraph{Two operational modes.}
This decomposition gives REPAIR two modes rather than a single fixed recipe. When
the calibration data behave class-separably (Definition~\ref{def:class-separable}),
the learned classwise term $a_y$ alone recovers the Bayes ordering and the pairwise
term can be omitted; this is the \emph{Classwise mode}. When the residual is
non-class-separable, the \emph{Pairwise mode} adds $\ell_y(x,S)$ to capture the
competition structure a fixed offset cannot. Classwise is therefore not merely a
baseline but the class-separable special case of the same framework, and the
pairwise term is optional: it is warranted precisely when a diagnostic check on
calibration data shows a non-class-separable behavior.

\subsection{Pairwise features}
\label{sec:features}

The feature vector $\phi(x,y,j)$ encodes how label $y$ competes against rival $j$ in context~$x$.
Four \emph{competition features} are computed from the base model's output: the \emph{score gap} $g_y(x) - g_j(x)$ (the negative of the threshold $t(x;\,y,j)$ in Eq.~\ref{eq:threshold}), the \emph{rank gap} $\mathrm{rank}(j) - \mathrm{rank}(y)$, the \emph{log-frequency ratio} $\log n_y - \log n_j$, where $n_y$ denotes the number of training examples for class $y$, and the log-probability ratio $\log P(y|S)/P(j|S)$.
\emph{Label-similarity features} encode how confusable two classes are: taxonomic distance for iNaturalist, WordNet path similarity for ImageNet-LT, and HPO phenotype similarity for the rare-disease benchmarks (Appendix~\ref{app:features}).

\subsection{Training and shrinkage}
\label{sec:training}

All parameters are estimated on \emph{covered} examples from a held-out calibration set with ($y_i \in S_i$).
The base model is never modified.
We fit $a$ and $\theta$ jointly by maximizing a conditional multinomial logit on the shortlist:
\begin{equation}
\label{eq:softmax}
q_{\theta,a}(y \mid x,S) \;=\; \frac{\exp\bigl(r_y(x,S)\bigr)}{\sum_{j \in S} \exp\bigl(r_j(x,S)\bigr)}, \quad y \in S.
\end{equation}
The training objective is the regularized conditional log-likelihood:
\begin{equation}
\label{eq:training-obj}
\mathcal{L}(a,\theta;\,\mathcal{D})
  \;=\; \sum_{(x_i,S_i,y_i) \in \mathcal{D}}
        \log q_{\theta,a}(y_i \mid x_i,S_i)
  \;-\; \lambda_\theta \|\theta\|_2^2
  \;-\; \lambda_a \|a\|_2^2,
\end{equation}
where $\mathcal{D} = \{(x_i,S_i,y_i) : y_i \in S_i\}$ is the set of covered examples.

\paragraph{Shrinkage.}
For rare classes with very few covered examples, $a_y$ can be noisy.
We stabilize via empirical-Bayes shrinkage toward a frequency-group mean:
\begin{equation}
\label{eq:shrinkage}
a^\star_y \leftarrow (1 - \lambda_y)\,\hat{a}_y + \lambda_y\,\mu_{b(y)},
\qquad \lambda_y = \frac{\sigma^2_y}{\sigma^2_y + \nu^2_{b(y)}},
\end{equation}
where $b(y)$ indexes a frequency group, $\mu_{b(y)}$ is the group mean, and $\sigma^2_y$, $\nu^2_{b(y)}$ are within-class and between-group variance estimates (Appendix~\ref{app:shrinkage}).
The complete procedure is summarized in Algorithm~\ref{alg:repair} (Appendix~\ref{app:algorithm}).

%% file: experiment.tex
\section{Experiments}
\label{sec:experiments}

\subsection{Setup}
\label{sec:setup}

\paragraph{Datasets and base models.}
We evaluate on nine benchmarks spanning text classification, image classification,
species recognition, and rare-disease diagnosis.
The four pure-text NLP benchmarks are
\textsc{GoEmotions} (28 classes), \textsc{MASSIVE} (60 classes),
\textsc{HuffPost-LT} (42 classes), and \textsc{Few-NERD} (66 classes),
each using a Qwen3.5-2B base classifier fine-tuned with LoRA.
The remaining benchmarks are image classification
(\textsc{ImageNet-LT}, 1{,}000 classes; \textsc{Places-LT}, 365 classes),
species recognition (\textsc{iNaturalist}~2018, 8{,}142 classes),
and rare-disease diagnosis
(\textsc{GMDB}, 508 classes; \textsc{RareBench}, 508 classes, out-of-distribution).
Base models range from ResNet-50/152 with cRT classifiers to a fine-tuned Qwen3.5-0.8B
for the clinical benchmarks and Qwen3.5-2B for the NLP benchmarks;
full details are in Appendix~\ref{app:datasets}.
In all cases, REPAIR operates on a fixed top-$k$ shortlist ($k{=}10$)
produced by the base model; the base model is never modified.
We designate the bottom 80\% of classes by training-set frequency
as \emph{rare} and the top 20\% as \emph{frequent}.

\paragraph{Baselines.}
All methods share the same shortlist.
\textbf{Base} ranks by the raw scores $g_y(x)$.
\textbf{LogitAdj}~\citep{menon2021longtail} adds $-\tau\log\pi_y$
with $\tau$ tuned on the calibration set.
\textbf{$\tau$-norm}~\citep{kang2020decoupling}
divides by the classifier weight norm $\|w_y\|^\tau$.
\textbf{Classwise} learns per-class offsets $a_y$
via Eq.~\ref{eq:training-obj} with shrinkage (Eq.~\ref{eq:shrinkage}),
setting $\theta=0$.
\textbf{REPAIR} adds the pairwise term $\ell_y(x,S)$.


\paragraph{Evaluation schemes.}
Reranker parameters are estimated on a held-out calibration set
and evaluated on an independent test set; the two never overlap. Each experiment is repeated over five trials, each subsampling $80\% $ of the calibration set with a distinct seed.
We report mean $\pm$ std for methods with learned parameters (Classwise, REPAIR) and single values for LogitAdj and $\tau$-norm, whose corrections are deterministic given the training-set frequency distribution and a fixed $\tau$ (tuned once on the full calibration set), and therefore do not vary across calibration subsamples. Significance is assessed via a sign test ($\geq 4/5$ wins);
full results are in Appendix~\ref{app:signtest}.
All metrics are conditioned on coverage ($Y \in S$).
We report Hit@1, Hit@3, MRR, Rare Hit@1, Frequent Hit@1, and the hardest-rival flip rate (HFR; Definition~\ref{def:hfr}), which measures whether the reranker can beat the strongest wrong competitor on inputs the base model misranks.

\begin{definition}[Hardest-rival flip rate] 
\label{def:hfr}
The probability that the reranker ranks the true label above the base model's top-scoring wrong class, among covered examples the base model misranks:
\[
\mathrm{HFR}(\hat{y}) = P\bigl(r_Y > r_{h(X,Y)} \mid C,\; 
\hat{y}_{\mathrm{base}} \neq Y\bigr),
\]
where $h(x,y) := \arg\max_{j \in S(x)\setminus\{y\}} g_j(x)$.
\end{definition}

\subsection{Results}
\label{sec:results}
Table~\ref{tab:main_combined} reports the comparison on the vision and clinical benchmarks, and Table~\ref{tab:main_nlp} the pure-text NLP benchmarks. The decomposition makes a direct prediction: on datasets where the base model's score gaps  are stable across inputs, classwise correction should suffice 
and the pairwise term should add little; 
on datasets where score gaps vary widely, 
the pairwise term should provide substantial gains.

\paragraph{Near-class-separable regime (vision benchmarks).}
On iNaturalist, ImageNet-LT, and Places-LT, REPAIR provides small but consistent Hit@1 improvements over Classwise ($+0.2\%$, $+0.2\%$, $+0.4\%$), with the learned Classwise correction competitive with or outperforming LogitAdj on all three.
HFR nearly doubles on iNaturalist ($0.074\to0.138$) and ImageNet-LT ($0.336\to0.378$). On Places-LT, LogitAdj and $\tau$-norm both degrade Hit@1 relative to Base, while Classwise and REPAIR both exceed it, confirming that learned offsets are more robust than closed-form corrections. The decomposition identifies these datasets 
as predominantly classwise: 
most ranking errors are correctable 
by a fixed per-class offset.

\paragraph{Non-class-separable regime (rare-disease benchmarks).}
On GMDB, REPAIR improves Hit@1 from $71.4$ to $72.3$ and HFR from $0.441$ to
$0.510$. The aggregate Hit@1 gain is driven by the model-derived competition
features rather than by HPO specifically; removing the HPO feature leaves every
REPAIR metric unchanged by less than $0.1$ (Appendix~\ref{app:hpo-ablation}).
HPO instead shapes the composition of the top-$k$ differential diagnosis,
promoting the correct diagnosis while keeping phenotypically related conditions
near the top (Appendix~\ref{app:casestudy}).
On RareBench (OOD), Hit@1 rises from $66.2$ to $85.6$ ($+19.4$\,\%)
and HFR from $0.215$ to $0.708$.
Two factors drive this:
the base model operates out-of-distribution
(trained on GMDB with image and text, tested on text),
which amplifies threshold variation across inputs;
and HPO phenotype similarity gives the pairwise term
discriminative signal the base model has no access to.
The RareBench test set contains only 32 covered examples,
so we verify this result with 10K-resample bootstrap:
the 95\% CI for the gain over Classwise is 
$[+6.2,\; +31.2](\%)$($P{=}0.999$; Appendix~\ref{app:rarebench-ci}).

\paragraph{Pure-text NLP classifiers.}
The decomposition is modality-agnostic: it applies to any classifier that produces class scores, including language models with a classification head. We verify this on four pure-text NLP
benchmarks (GoEmotions, MASSIVE, HuffPost-LT, Few-NERD), each using a Qwen3.5-2B
base classifier fine-tuned with LoRA under the identical protocol. The same
contrast holds: on the class-separable \textsc{GoEmotions}, Classwise already
suffices and REPAIR adds little ($+0.05$ Hit@1 over Classwise), whereas on the
non-class-separable \textsc{MASSIVE}, \textsc{HuffPost-LT}, and \textsc{Few-NERD}
the pairwise term yields clear gains ($+1.78$, $+0.84$, $+1.05$ Hit@1, and
$+5.53$, $+2.04$, $+1.83$ Rare Hit@1 over Classwise). Full results are in
Appendix~\ref{app:nlp}.

\paragraph{Head--tail trade-off.}
On vision benchmarks, improving Rare Hit@1 comes at some cost 
to Frequent Hit@1. Most of this shift originates from the 
classwise correction itself; the pairwise term adds only a 
small further change. On RareBench, both strata improve 
simultaneously. 

\paragraph{Component ablation.}
In the synthetic setting 
(Figure~\ref{fig:ablation-synthetic}, Appendix~\ref{app:ablation}),
CW-only matches REPAIR in the class-separable regime 
while PW-only adds nothing;
in the non-class-separable regime, 
the full model outperforms both 
($72.5$ vs.\ $61.1$ CW-only, $72.1$ PW-only).
On real data (Appendix~\ref{app:ablation}),
the relative importance of the two components 
shifts across the spectrum.
On iNaturalist, 
PW-only alone matches REPAIR 
(Rare Hit@1: $42.1$),
suggesting that the pairwise features 
implicitly absorb the classwise correction.
On ImageNet-LT, Places-LT and GMDB, both components contribute incrementally.
On RareBench, neither component alone 
approaches the full model 
(CW-only $58.8$, PW-only $47.1$, REPAIR $88.2$; single random seed):
the classwise term calibrates the score scale 
while the pairwise term redirects rankings,
and joint estimation is essential 
for the two corrections to reinforce each other.

\subsection{Quintile analysis}
\label{sec:quintile}

\begin{figure}[!ht]
\centering
\includegraphics[width=0.85\textwidth]{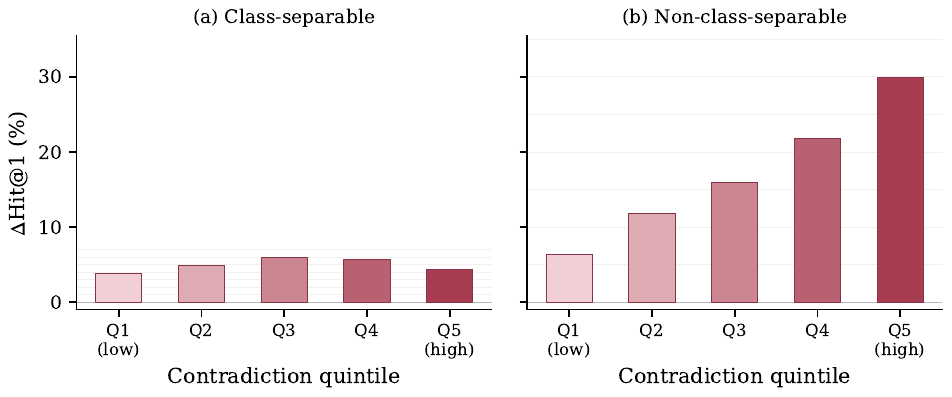}
\vspace{-6pt}
\caption{$\Delta$Hit@1 of REPAIR over Classwise 
by mean $D_y$ quintile 
(synthetic, $K{=}100$, $k{=}10$, rare classes).
\textbf{(a)}~Class-separable: almost flat across quintiles.
\textbf{(b)}~Non-class-separable: monotone increasing from Q1 to Q5.}
\label{fig:quintile-synthetic}
\end{figure}

We stratify rare classes by their mean threshold 
dispersion $D_y$ (Eq.~\ref{eq:threshold-dispersion}) 
into five equal-sized quintiles 
and measure the $\Delta$Hit@1 of REPAIR over Classwise 
within each bin.
In the class-separable setting~(a), 
the gain is uniformly small and flat, which means $D_y$ 
does not modulate the pairwise contribution 
when the residual is approximately classwise.
In the non-class-separable setting~(b), 
the gain increases monotonically 
from Q1 ($+6\%$) to Q5 ($+30\%$).
On real data (Appendix~\ref{app:quintile}), the same pattern holds: on GMDB, Q5 shows the largest gain while Q1 shows the smallest, consistent with the non-class-separable regime; on iNaturalist, the gain is uniformly small and the quintile gradient is flat, consistent with a near-class-separable residual. ImageNet-LT and Places-LT fall between these extremes.

\begin{table*}[t]
\centering
\caption{Reranking results on three visual recognition and two rare-disease diagnosis benchmarks (shortlist size $k=10$). All metrics are conditioned on the true label appearing in the shortlist; arrows indicate higher is better. Rare and Frequent denote the bottom $80\%$ and top $20\%$ of classes by training-set frequency. Classwise is our framework without the pairwise term; both Classwise and \repair{} report mean{\scriptsize$\pm$std} over 5 calibration subsamples. Best per dataset in \textbf{bold}. Row shading: \colorbox{blue!6}{blue} = near-class-separable (vision benchmarks), \colorbox{red!8}{red} = non-class-separable (rare-disease benchmarks).}
\label{tab:main_combined}
\vspace{2pt}
\setlength{\tabcolsep}{3.5pt}
\renewcommand{\arraystretch}{1.10}
\resizebox{\textwidth}{!}{\begin{tabular}{l l l ccc cc c}
\toprule
& & & \multicolumn{3}{c}{\textit{Overall}} & \multicolumn{2}{c}{\textit{Stratified Hit@1}} & \\
\cmidrule(lr){4-6} \cmidrule(lr){7-8}
Task & Dataset & Method & Hit@1$\uparrow$ & Hit@3$\uparrow$ & MRR$\uparrow$ & Rare$\uparrow$ & Freq.$\uparrow$ & HFR$\uparrow$ \\
\midrule
\multirow{5}{*}{\rotatebox[origin=c]{90}{\shortstack{Species\\Recog.}}}
 & \multirow{5}{*}{\shortstack[l]{\textsc{iNaturalist}\\[-1pt]{\scriptsize(ResNet-50)}}}
 & Base             & $47.0$ & $71.7$ & $.627$ & $37.5$ & $\mathbf{60.0}$ & --    \\
 & & LogitAdj         & $\mathbf{49.0}$ & $74.7$ & $.648$ & $41.7$ & $58.9$ & $.097$ \\
 & & $\tau$-norm      & $48.1$ & $73.7$ & $.640$ & $40.0$ & $59.2$ & $.105$ \\
 & & Classwise        & $48.8${\tiny$\pm.02$} & $74.5${\tiny$\pm.01$} & $.646${\tiny$\pm.000$} & $41.1${\tiny$\pm.02$} & $59.3${\tiny$\pm.01$} & $.074${\tiny$\pm.001$} \\
 & & \cellcolor{blue!10} REPAIR (Ours)
   & \cellcolor{blue!10} $\mathbf{49.0}${\tiny$\pm.01$}
   & \cellcolor{blue!10} $\mathbf{75.6}${\tiny$\pm.02$}
   & \cellcolor{blue!10} $\mathbf{.652}${\tiny$\pm.000$}
   & \cellcolor{blue!10} $\mathbf{42.2}${\tiny$\pm.02$}
   & \cellcolor{blue!10} $58.4${\tiny$\pm.02$}
   & \cellcolor{blue!10} $\mathbf{.138}${\tiny$\pm.002$} \\
\midrule
\multirow{5}{*}{\rotatebox[origin=c]{90}{\shortstack{Image\\Classif.}}}
 & \multirow{5}{*}{\shortstack[l]{\textsc{ImageNet-LT}\\[-1pt]{\scriptsize(ResNet-50)}}}
 & Base             & $54.4$ & $78.6$ & $.689$ & $47.4$ & $\mathbf{76.7}$ & --    \\
 & & LogitAdj         & $61.5$ & $85.3$ & $.748$ & $59.5$ & $68.1$ & $.321$ \\
 & & $\tau$-norm      & $60.5$ & $84.8$ & $.741$ & $58.1$ & $68.2$ & $.299$ \\
 & & Classwise        & $61.8${\tiny$\pm.02$} & $85.4${\tiny$\pm.02$} & $.750${\tiny$\pm.000$} & $59.9${\tiny$\pm.05$} & $67.8${\tiny$\pm.1$} & $.336${\tiny$\pm.001$} \\
 & & \cellcolor{blue!10} REPAIR (Ours)
   & \cellcolor{blue!10} $\mathbf{62.0}${\tiny$\pm.04$}
   & \cellcolor{blue!10} $\mathbf{85.4}${\tiny$\pm.03$}
   & \cellcolor{blue!10} $\mathbf{.751}${\tiny$\pm.000$}
   & \cellcolor{blue!10} $\mathbf{60.8}${\tiny$\pm.05$}
   & \cellcolor{blue!10} $65.7${\tiny$\pm.1$}
   & \cellcolor{blue!10} $\mathbf{.378}${\tiny$\pm.001$} \\
\midrule
\multirow{5}{*}{\rotatebox[origin=c]{90}{\shortstack{Scene\\Recog.}}}
 & \multirow{5}{*}{\shortstack[l]{\textsc{Places-LT}\\[-1pt]{\scriptsize(ResNet-152)}}}
 & Base             & $44.7$ & $72.1$ & $.617$ & $46.3$ & $\mathbf{37.9}$ & --    \\
 & & LogitAdj         & $44.5$ & $71.5$ & $.613$ & $47.9$ & $30.4$ & $.072$ \\
 & & $\tau$-norm      & $19.7$ & $45.5$ & $.397$ & $24.5$ & $0.0$ & $\mathbf{.405}$ \\
 & & Classwise        & $45.1${\tiny$\pm.1$} & $72.4${\tiny$\pm.2$} & $.620${\tiny$\pm.000$} & $46.9${\tiny$\pm.1$} & $37.4${\tiny$\pm.3$} & $.028${\tiny$\pm.001$} \\
 & & \cellcolor{blue!10} REPAIR (Ours)
   & \cellcolor{blue!10} $\mathbf{45.5}${\tiny$\pm.1$}
   & \cellcolor{blue!10} $\mathbf{72.5}${\tiny$\pm.1$}
   & \cellcolor{blue!10} $\mathbf{.622}${\tiny$\pm.001$}
   & \cellcolor{blue!10} $\mathbf{48.2}${\tiny$\pm.1$}
   & \cellcolor{blue!10} $34.1${\tiny$\pm.3$}
   & \cellcolor{blue!10} $.072${\tiny$\pm.001$} \\
\midrule
\multirow{9}{*}{\rotatebox[origin=c]{90}{\shortstack{Rare Disease\\Diagnosis}}}
 & \multirow{5}{*}{\shortstack[l]{\textsc{GMDB}\\[-1pt]{\scriptsize(Qwen-0.8B)}}}
 & Base             & $64.0$ & $79.3$ & $.741$ & $62.9$ & $64.7$ & --    \\
 & & LogitAdj         & $70.6$ & $85.5$ & $.797$ & $78.8$ & $65.1$ & $.422$ \\
 & & $\tau$-norm      & $67.4$ & $81.6$ & $.769$ & $66.4$ & $\mathbf{68.0}$ & $.168$ \\
 & & Classwise        & $71.4${\tiny$\pm.1$} & $85.8${\tiny$\pm.0$} & $.801${\tiny$\pm.001$} & $78.5${\tiny$\pm.1$} & $66.5${\tiny$\pm.2$} & $.441${\tiny$\pm.004$} \\
 & & \cellcolor{red!10} REPAIR (Ours)
   & \cellcolor{red!10} $\mathbf{72.3}${\tiny$\pm.3$}
   & \cellcolor{red!10} $\mathbf{85.8}${\tiny$\pm.2$}
   & \cellcolor{red!10} $\mathbf{.806}${\tiny$\pm.001$}
   & \cellcolor{red!10} $\mathbf{79.6}${\tiny$\pm.3$}
   & \cellcolor{red!10} $67.4${\tiny$\pm.5$}
   & \cellcolor{red!10} $\mathbf{.510}${\tiny$\pm.005$} \\
\cmidrule(l){2-9}
 & \multirow{5}{*}{\shortstack[l]{\textsc{RareBench}\\[-1pt]{\scriptsize(Qwen-0.8B, text)}}}
 & Base             & $59.4$ & $75.0$ & $.712$ & $41.2$ & $80.0$ & --   \\
 & & LogitAdj         & $62.5$ & $71.9$ & $.712$ & $52.9$ & $73.3$ & $.385$ \\
 & & $\tau$-norm      & $56.2$ & $78.1$ & $.702$ & $41.2$ & $73.3$ & $.000$ \\
 & & Classwise        & $66.2${\tiny$\pm 1.2$} & $91.2${\tiny$\pm 2.3$} & $.789${\tiny$\pm.006$} & $54.1${\tiny$\pm 2.4$} & $80.0${\tiny$\pm.0$} & $.215${\tiny$\pm.031$} \\
 & & \cellcolor{red!10} REPAIR (Ours)
   & \cellcolor{red!10} $\mathbf{85.6}${\tiny$\pm 4.2$}
   & \cellcolor{red!10} $\mathbf{92.5}${\tiny$\pm 1.5$}
   & \cellcolor{red!10} $\mathbf{.899}${\tiny$\pm.026$}
   & \cellcolor{red!10} $\mathbf{81.2}${\tiny$\pm 4.4$}
   & \cellcolor{red!10} $\mathbf{90.7}${\tiny$\pm 5.3$}
   & \cellcolor{red!10} $\mathbf{.708}${\tiny$\pm.090$} \\
\bottomrule
\end{tabular}}
\end{table*}

%% file: conclusion.tex
\section{Conclusion}
\label{sec:conclusion}

In this work, we proposed a residual decomposition framework for post-hoc reranking under long-tailed class distributions. Unlike logit adjustment, which applies a fixed per-class offset derived from class frequencies, our framework decomposes the Bayes-optimal correction into a learned classwise term and an input-dependent pairwise term, with a formal characterization of when the classwise component alone is sufficient and when it is not. Empirically, the learned classwise correction already outperforms logit adjustment on most datasets, and the pairwise term provides further gains proportional to the degree of threshold variation across nine benchmarks spanning text classification, visual recognition, and rare-disease diagnosis, with especially large improvements in the rare-disease setting where phenotypically similar conditions create strong pairwise competition. The current pairwise features are compact and domain-interpretable, but richer representations learned from the base model's embeddings could capture competition patterns that hand-crafted features miss. Overall, these results demonstrate that in long-tailed settings, decomposing the residual correction into classwise and pairwise components offers a principled and lightweight alternative to purely frequency-based post-hoc adjustments, particularly in domains where local competition between similar classes causes ranking errors that no fixed offset can resolve.

%% file: appendix.tex
\appendix
\section{Additional Related Work}
\label{app:add_related}

\paragraph{Training-time corrections.}
Training time methods modify the loss, sampling or architecture during learning. Re-weighing approaches adjust per-class loss contributions based on class frequency or sample difficulties ~\citep{focalloss,cbloss}, while margin-based methods enforce larger-decision margin for rare classes~\citep{marginloss}. At sampling level, decoupled training learns representation with standard sampling and re-balances the classifier~\citep{kang2020decoupling,improvingcalibration}. Ensemble methods in addition offer structural solution by training multiple experts with complementary skills across frequent classes~\citep{zhang2022selfsupervised,wang2021longtailed}. 
\section{Notation Reference}
\label{app:notation}
 
Table~\ref{tab:notation} summarizes the principal symbols used throughout the paper.
 
\begin{table}[h]
\centering
\caption{Summary of notation.}
\label{tab:notation}
\setlength{\tabcolsep}{6pt}
\renewcommand{\arraystretch}{1.15}
\begin{tabular}{l l l}
\toprule
Symbol & Definition & Location \\
\midrule
\multicolumn{3}{l}{\textit{Setup}} \\
$X$ & Input & \S\ref{sec:prelim} \\
$Y \in [K]$ & True label & \S\ref{sec:prelim} \\
$\pi_y$ & Class prior $P(Y{=}y)$ & \S\ref{sec:prelim} \\
$g_y(x)$ & Base-model score for class $y$ & \S\ref{sec:prelim} \\
$k$ & Shortlist size & \S\ref{sec:prelim} \\
$S(x)$ & Top-$k$ shortlist under $g$ & \S\ref{sec:prelim} \\
$C$ & Coverage event $\{Y \in S(X)\}$ & \S\ref{sec:prelim} \\
\midrule
\multicolumn{3}{l}{\textit{Theory (\S\ref{sec:framework})}} \\
$p_S(y \mid x)$ & Shortlist posterior & Eq.~(\ref{eq:shortlist-posterior}) \\
$s^\star_y(x,S)$ & Bayes-optimal shortlist score & Eq.~(\ref{eq:bayes-score}) \\
$\beta^\star_y(x,S)$ & Residual correction $s^\star_y - g_y$ & Eq.~(\ref{eq:residual}) \\
$\Delta_{uv}(x,S)$ & Pairwise gap $\beta^\star_u - \beta^\star_v$ & Eq.~(\ref{eq:pairwise-gap}) \\
$t(x;\,u,v)$ & Base-score threshold $g_v(x) - g_u(x)$ & Eq.~(\ref{eq:threshold}) \\
$D_y(S)$ & Threshold dispersion & Eq.~(\ref{eq:threshold-dispersion}) \\
$\rho_k$ & Recoverable gap closure & Eq.~(\ref{eq:gap-closed}) \\
\midrule
\multicolumn{3}{l}{\textit{Algorithm (\S\ref{sec:algorithm})}} \\
$r_y(x,S)$ & Reranker score $g_y + \hat{\beta}_y$ & Eq.~(\ref{eq:score-model}) \\
$\hat{\beta}_y$ & Estimated residual $a_y + \ell_y$ & \S\ref{sec:score-model} \\
$a_y$ & Learned classwise offset & Eq.~(\ref{eq:score-model}) \\
$\ell_y(x,S)$ & Pairwise correction & Eq.~(\ref{eq:score-model}) \\
$\theta$ & Shared pairwise parameter & Eq.~(\ref{eq:pairwise-correction}) \\
$\phi(x,y,j)$ & Pairwise feature vector & \S\ref{sec:features} \\
$\nu^2_{b(y)},\;\sigma^2_y$ & Shrinkage variances & Eq.~(\ref{eq:shrinkage}) \\
\midrule
\multicolumn{3}{l}{\textit{Evaluation (\S\ref{sec:experiments})}} \\
Hit@$k$ & $P(\mathrm{rank}_r(Y) \leq k \mid C)$ & \S\ref{sec:setup} \\
MRR & $\mathbb{E}[\mathrm{rank}_r(Y)^{-1} \mid C]$ & \S\ref{sec:setup} \\
HFR & Hardest-rival flip rate & Def.~\ref{def:hfr} \\
\bottomrule
\end{tabular}
\end{table}
 
\section{Theoretical Supplement}
\label{app:theory}

\subsection{Restricted posterior on the shortlist}
\label{app:restricted-posterior}

Because $S(x)$ is a deterministic function of $x$, the coverage probability given $X=x$ is
\begin{equation}
\alpha(x) := P(C \mid X=x) = \sum_{j \in S(x)} P(Y=j \mid X=x).
\end{equation}
Whenever $\alpha(x) > 0$,
\begin{equation}
p_S(y \mid x) = P(Y=y \mid X=x, C) = \frac{P(Y=y \mid X=x) \cdot \mathbf{1}\{y \in S(x)\}}{\alpha(x)}.
\end{equation}
Hence the Bayes-optimal shortlist score is
\begin{equation}
s^\star_y(x,S) = \log p_S(y \mid x) = \log P(Y=y \mid X=x) - \log\alpha(x), \qquad y \in S(x),
\end{equation}
so only the relative values across $y \in S(x)$ matter for ranking.
The term $\log\alpha(x)$ is shared by all labels in the shortlist and therefore cancels out of any pairwise comparison.

\subsection{Weighted variant}
\label{app:weighted}

The paper uses ordinary top-1 risk for clarity.
If one instead optimizes a class-weighted risk with weights $w_y > 0$, then the Bayes-optimal shortlist score becomes $\log w_y + \log p_S(y \mid x)$.
All main arguments carry over verbatim after adding $\log w_y$ to the ideal score.
We keep the unweighted version in the main text because it is easier to read.

\subsection{Unconditional form of the recoverable gap closure}
\label{sec:rho-unconditional}

When expressed in unconditional accuracy, the recoverable gap closure~(\ref{eq:gap-closed}) takes the equivalent form
\begin{equation}
\rho_k(\hat{y}) = \frac{\mathrm{Acc}(\hat{y}) - \mathrm{Acc}(\hat{y}_{\mathrm{base}})}{P(C) - \mathrm{Acc}(\hat{y}_{\mathrm{base}})},
\end{equation}
since $\mathrm{Hit@1}(\hat{y}) = \mathrm{Acc}(\hat{y})/P(C)$ and the $P(C)$ factors cancel.
This equivalence holds because all rerankers share the same shortlist and hence the same coverage $P(C)$.

\subsection{Proofs}
\label{app:proofs}

\subsubsection{Bayes-optimal reranking on the shortlist}
\label{proof:bayes-shortlist}

\begin{proposition}[Bayes rule on a fixed shortlist]
Among all measurable rules $\hat{y}(X) \in S(X)$, a Bayes-optimal reranker can be chosen pointwise as $\hat{y}^\star(x) \in \arg\max_{y \in S(x)} p_S(y \mid x)$.
Equivalently, any score function strictly increasing in $p_S(y \mid x)$ yields the same Bayes ordering.
\end{proposition}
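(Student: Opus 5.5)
The argument is the standard pointwise one: reduce the risk of any shortlist-valued rule to an integral of conditional accuracies, then maximize the integrand separately at each $x$. Take the risk to be top-1 error conditioned on coverage (the paper's evaluation target), or equivalently unconditional accuracy, since $P(\hat{y}(X)=Y)=P(\hat{y}(X)=Y, C)$ for any rule with $\hat{y}(X)\in S(X)$.

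First, for any measurable $\hat{y}$ with values in $S(X)$, I condition on $X$ and use that $S(X)$ is a deterministic function of $X$:
\[
P(\hat{y}(X)=Y \mid C) \;=\; \frac{\mathbb{E}\bigl[\alpha(X)\, p_S(\hat{y}(X)\mid X)\bigr]}{P(C)},
\]
where $\alpha(x)=P(C\mid X=x)$ is as in Appendix~\ref{app:restricted-posterior}. This holds because
\[
P(Y=\hat{y}(x)\mid X=x) \;=\; \alpha(x)\,p_S(\hat{y}(x)\mid x)
\]
whenever $\hat{y}(x)\in S(x)$. On the set where $\alpha(x)=0$ the integrand vanishes and the choice of $\hat{y}$ is irrelevant.

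Second, for each $x$ the weight $\alpha(x)\ge 0$ does not depend on the rule. Hence the integrand satisfies
\[
\alpha(x)\,p_S(\hat{y}(x)\mid x) \;\le\; \alpha(x)\max_{y\in S(x)} p_S(y\mid x),
\]
with equality for $\hat{y}^\star(x)\in\arg\max_{y\in S(x)} p_S(y\mid x)$. Taking expectations shows that $\hat{y}^\star$ attains the maximum accuracy over all admissible rules.

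The one technical point is measurability of $\hat{y}^\star$. The shortlist $S(x)$ is finite (size $k$) and determined by $g$ with deterministic tie-breaking, and each $x\mapsto p_S(y\mid x)$ is a measurable version of a conditional probability. So I define $\hat{y}^\star(x)$ as the smallest-index maximizer. This is a finite composition of measurable comparisons and is therefore measurable, which avoids any selection theorem.

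Finally, the equivalence claim follows because a strictly increasing transformation $h$ preserves the ordering of $\{p_S(y\mid x)\}_{y\in S(x)}$ at every $x$. The induced rankings are therefore identical, and in particular the argmax sets coincide. This covers the log-score $s^\star_y=\log p_S(y\mid x)$, and also any score that differs from it by a shortlist-shared term such as $-\log\alpha(x)$.

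I expect no real obstacle. The only care needed is handling null sets (versions of conditional probabilities, and $\alpha(x)=0$) and making the tie-breaking explicit.
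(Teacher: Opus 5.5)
Your proof is correct and uses the same pointwise argument as the paper: conditional on $X=x$ (and coverage), the accuracy of a shortlist rule reduces to $p_S(\hat{y}(x)\mid x)$, which any argmax selector maximizes. Three steps you make explicit are left implicit in the paper: the $\alpha(x)$-reweighting that turns the pointwise bound into the coverage-conditioned risk, the smallest-index measurable selector, and the order-preservation argument for the equivalence clause.
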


\begin{proof}
Fix $x$.
Among all measurable rules $r(x) \in S(x)$, the conditional probability of being correct is
\[
P(r(X)=Y \mid X=x, C) = \sum_{y \in S(x)} \mathbf{1}\{r(x)=y\}\, p_S(y \mid x) = p_S(r(x) \mid x).
\]
Since $r(x)$ is a deterministic choice, the indicator is nonzero for exactly one value of $y$, and the sum collapses to $p_S(r(x) \mid x)$.
Therefore the conditional risk given $X=x$ and $C$ is minimized by any $r(x) \in \arg\max_{y \in S(x)} p_S(y \mid x)$.
Since this argument is pointwise in $x$, any measurable selector of that argmax is Bayes-optimal on covered examples.
\end{proof}

\subsubsection{Plug-in estimation error}
\label{proof:plugin}

\begin{proposition}[Plug-in control]
Let $\hat{p}_S(\cdot \mid x)$ be an estimate of $p_S(\cdot \mid x)$, and let $\hat{y}_{\mathrm{plug}}(x) \in \arg\max_{y \in S(x)} \hat{p}_S(y \mid x)$.
Then the conditional excess risk on covered examples satisfies
\begin{equation}
P(\hat{y}_{\mathrm{plug}}(X) \neq Y \mid C) - P(\hat{y}^\star(X) \neq Y \mid C) \leq 2\,\mathbb{E}[\|\hat{p}_S(\cdot \mid x) - p_S(\cdot \mid x)\|_1 \mid C].
\end{equation}
\end{proposition}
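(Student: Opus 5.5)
The plan is the standard plug-in argument, run pointwise in $x$ and then averaged under the law of $X$ given $C$. Write $p := p_S(\cdot\mid x)$ and $\hat p := \hat p_S(\cdot\mid x)$. Let $y^\star := \hat y^\star(x)$ and $\hat y := \hat y_{\mathrm{plug}}(x)$.

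\textbf{Pointwise excess risk.} By the pointwise computation in the proof of the Bayes-rule proposition (Appendix~\ref{proof:bayes-shortlist}), the conditional probability of error of any shortlist rule $r$ given $X=x$ and $C$ is $1 - p(r(x))$. The conditional excess risk at $x$ is therefore
\[
p(y^\star) - p(\hat y).
\]
I will decompose this quantity as
\[
p(y^\star) - p(\hat y) = \bigl[p(y^\star) - \hat p(y^\star)\bigr] + \bigl[\hat p(y^\star) - \hat p(\hat y)\bigr] + \bigl[\hat p(\hat y) - p(\hat y)\bigr].
\]
The middle bracket is $\le 0$ because $\hat y$ maximizes $\hat p$ over $S(x)$. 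Each outer bracket is at most $\max_{y\in S}|\hat p(y)-p(y)| \le \|\hat p - p\|_1$. This gives the pointwise bound $2\|\hat p - p\|_1$. If $\hat y \neq y^\star$, the two outer terms involve distinct coordinates, so the sharper bound $\|\hat p - p\|_1$ also holds; I only need the factor-2 version.

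\textbf{Averaging.} Take expectation over $X$ conditional on $C$. The conditional risks given $(X=x, C)$ integrate to $P(\cdot \neq Y \mid C)$ by the tower property. Monotonicity of expectation then yields the stated inequality. The $x$ inside the expectation on the right is read as $X$.

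\textbf{Main obstacle.} The argument is routine; the only care needed is measurability. The argmax selectors must be measurable so that the conditional-risk identity holds and the expectation is well defined. I would assume, as the paper implicitly does, measurable selectors with deterministic tie-breaking, mirroring the deterministic tie-breaking used to define $S(X)$. I would also note that ties in the argmax do not affect the bound.
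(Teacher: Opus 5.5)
Your proposal is correct and matches the paper's proof essentially step for step. The paper likewise reduces the excess risk to $\mathbb{E}[p_S(\hat{y}^\star(X)\mid x) - p_S(\hat{y}_{\mathrm{plug}}(X)\mid x) \mid C]$ via a covered excess-risk identity, uses the same three-term add-and-subtract decomposition with a nonpositive middle term and outer terms each bounded by the $\ell_1$ norm, and then takes conditional expectations given $C$. Your side remark is also correct but not needed: the factor $2$ can be dropped because the outer terms involve distinct coordinates when $\hat{y} \neq y^\star$, and the excess is zero when $\hat{y} = y^\star$.
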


\begin{lemma}[Covered excess-risk identity]
For any measurable shortlist rule $\hat{y}(X) \in S(X)$,
\[
P(\hat{y}(X) \neq Y \mid C) - P(\hat{y}^\star(X) \neq Y \mid C) = \mathbb{E}[p_S(\hat{y}^\star(X) \mid x) - p_S(\hat{y}(X) \mid x) \mid C].
\]
\end{lemma}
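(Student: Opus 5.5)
The plan is to condition on $X=x$ and $C$ and write both error probabilities as expectations of the conditional error given $x$, reusing the pointwise computation from the proof of the Bayes rule on a fixed shortlist (Appendix~\ref{proof:bayes-shortlist}). Fix a covered $x$ (so $\alpha(x)>0$) and a deterministic rule $\hat{y}(x)\in S(x)$. That proof shows
\[
P(\hat{y}(X)=Y \mid X=x, C) = \sum_{y\in S(x)} \mathbf{1}\{\hat{y}(x)=y\}\,p_S(y\mid x) = p_S(\hat{y}(x)\mid x).
\]
Hence $P(\hat{y}(X)\neq Y \mid X=x, C) = 1 - p_S(\hat{y}(x)\mid x)$. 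The same identity holds for $\hat{y}^\star$.

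Next, apply the tower property with respect to the conditional law of $X$ given $C$:
\[
P(\hat{y}(X)\neq Y \mid C) = \mathbb{E}\bigl[\,1 - p_S(\hat{y}(X)\mid X) \,\big|\, C\bigr],
\]
and likewise for $\hat{y}^\star$. Subtracting the two expressions cancels the constant $1$ and leaves exactly $\mathbb{E}[p_S(\hat{y}^\star(X)\mid X) - p_S(\hat{y}(X)\mid X)\mid C]$, which is the claimed identity. The integrand is nonnegative pointwise by the choice of $\hat{y}^\star$, so the excess risk is nonnegative; this is what the plug-in proposition will need.

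The only delicate step is justifying the tower step. We need $x\mapsto p_S(\hat{y}(x)\mid x)$ to be measurable and the disintegration of $P(\cdot\mid C)$ over $X$ to be well defined. Measurability follows because $S$ is a deterministic measurable function of $x$ (top-$k$ with fixed tie-breaking), $\hat{y}$ is measurable, and $p_S(\hat{y}(x)\mid x)=\sum_{y}\mathbf{1}\{\hat{y}(x)=y\}\,p_S(y\mid x)$ is a finite sum of measurable functions. For the disintegration, I would use the restricted-posterior formula from Appendix~\ref{app:restricted-posterior}: conditioning on $C$ reweights the law of $X$ by $\alpha(x)/P(C)$. This makes the conditional expectation given $(X=x, C)$ legitimate on the set $\{\alpha>0\}$, which carries all the mass of $X\mid C$.
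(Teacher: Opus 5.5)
Your proposal is correct and follows essentially the same route as the paper: compute the conditional error $1-p_S(\hat{y}(x)\mid x)$ given $X=x$ and $C$ for each rule, subtract, and average over $X$ given $C$. Your extra measurability and disintegration remarks (the law of $X$ given $C$ is the law of $X$ reweighted by $\alpha(x)/P(C)$) spell out a step the paper leaves implicit, but they do not change the argument.
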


\begin{proof}
Condition on $X=x$ and $C$.
The conditional error rates are $1-p_S(\hat{y}(x) \mid x)$ and $1-p_S(\hat{y}^\star(x) \mid x)$, respectively.
Subtracting:
\[
\bigl(1-p_S(\hat{y}(x) \mid x)\bigr) - \bigl(1-p_S(\hat{y}^\star(x) \mid x)\bigr) = p_S(\hat{y}^\star(x) \mid x) - p_S(\hat{y}(x) \mid x).
\]
Taking expectations over $X$ given $C$ yields the identity.
\end{proof}

\begin{proof}[Proof of Proposition (Plug-in control)]
Using the lemma,
\[
\Delta := P(\hat{y}_{\mathrm{plug}}(X) \neq Y \mid C) - P(\hat{y}^\star(X) \neq Y \mid C) = \mathbb{E}\bigl[p_S(\hat{y}^\star(X) \mid x) - p_S(\hat{y}_{\mathrm{plug}}(X) \mid x) \mid C\bigr].
\]
For each $x$, we add and subtract $\hat{p}_S$:
\begin{align*}
p_S(\hat{y}^\star(x) \mid x) - p_S(\hat{y}_{\mathrm{plug}}(x) \mid x)
&= \underbrace{p_S(\hat{y}^\star(x) \mid x) - \hat{p}_S(\hat{y}^\star(x) \mid x)}_{\text{(i)}} \\
&\quad + \underbrace{\hat{p}_S(\hat{y}^\star(x) \mid x) - \hat{p}_S(\hat{y}_{\mathrm{plug}}(x) \mid x)}_{\text{(ii)}} \\
&\quad + \underbrace{\hat{p}_S(\hat{y}_{\mathrm{plug}}(x) \mid x) - p_S(\hat{y}_{\mathrm{plug}}(x) \mid x)}_{\text{(iii)}}.
\end{align*}
Term (ii) $\leq 0$ by definition of $\hat{y}_{\mathrm{plug}}$.
Each of terms (i) and (iii) is bounded by $\|\hat{p}_S(\cdot \mid x) - p_S(\cdot \mid x)\|_1$.
Taking conditional expectations given $C$ completes the proof.
\end{proof}

\subsubsection{Proof of Proposition~\ref{prop:classwise-special}}
\label{proof:classwise-special}

\begin{proof}
If $\beta^\star_y(x,S) = a_y + c(x,S)$ for every covered triple $(x,S,y)$ and some shortlist-shared scalar $c(x,S)$, then
\[
s^\star_y(x,S) = g_y(x) + a_y + c(x,S) \quad \text{for all } y \in S.
\]
Therefore the fixed-offset score $g_y(x) + a_y$ differs from the Bayes-optimal shortlist score only by the shortlist-shared constant $c(x,S)$.
Since adding the same constant to all shortlisted labels does not change the ranking, the ordering induced by $g_y(x)+a_y$ is Bayes-optimal on every covered example.
\end{proof}

\subsubsection{Proof of Theorem~\ref{thm:pairwise-necessary}}
\label{proof:pairwise-necessary}

\begin{proof}
Assume for contradiction that some fixed-classwise model $g_\cdot(x) + a_\cdot$ agrees with the Bayes ordering in both contexts.

Because the Bayes rule ranks $u$ above $v$ at $x$, the fixed-classwise model must satisfy $g_u(x) + a_u > g_v(x) + a_v$, equivalently
\begin{equation}
a_u - a_v > g_v(x) - g_u(x) = t(x;\,u,v).
\end{equation}
Because the Bayes rule ranks $v$ above $u$ at $x'$, the same model must satisfy $g_v(x') + a_v > g_u(x') + a_u$, equivalently
\begin{equation}
a_u - a_v < g_v(x') - g_u(x') = t(x';\,u,v).
\end{equation}
Combining yields $t(x;\,u,v) < a_u - a_v < t(x';\,u,v)$, which requires $t(x;\,u,v) < t(x';\,u,v)$.
But by assumption $t(x;\,u,v) \geq t(x';\,u,v)$, a contradiction.
\end{proof}


\subsubsection{Attainable frontier under fixed retrieval}
\label{proof:frontier}

\begin{remark}
For any reranker $\hat{y}(X) \in S(X)$, the event $\{\hat{y}(X)=Y\}$ implies $C$, so $P(\hat{y}(X)=Y) = P(\hat{y}(X)=Y,C) \leq P(C)$.
Equivalently, $\mathrm{Hit@1}(\hat{y}) \leq 1$.
\end{remark}

\subsubsection{Conditional likelihood for the training objective}
\label{app:cond-likelihood}

For covered training examples $(x_i,S_i,y_i)$, the model
\[
q_{\theta,a}(y \mid x,S) = \frac{\exp\bigl(r_y(x,S)\bigr)}{\sum_{j \in S}\exp\bigl(r_j(x,S)\bigr)}
\]
is a standard multinomial logit model over the shortlist.
Assuming conditional independence across covered examples, the log-likelihood is $\sum_{i:\,y_i \in S_i} \log q_{\theta,a}(y_i \mid x_i,S_i)$, and adding $\ell_2$ penalties yields Eq.~(\ref{eq:training-obj}).

\subsubsection{Empirical-Bayes shrinkage}
\label{app:shrinkage}

A simple way to stabilize rare-class offsets is to treat the raw estimate $\tilde{a}_y$ as a noisy observation of an unobserved true offset $a_y$, and to place a Gaussian prior on $a_y$ within a frequency group $b(y)$:
\[
\tilde{a}_y \mid a_y \sim \mathcal{N}(a_y,\,\sigma^2_y), \qquad a_y \sim \mathcal{N}(\mu_{b(y)},\,\nu^2_{b(y)}).
\]
The posterior mean is
\begin{equation}
\label{eq:shrinkage-appendix}
\mathbb{E}[a_y \mid \tilde{a}_y] = (1-\lambda_y)\,\tilde{a}_y + \lambda_y\,\mu_{b(y)}, \qquad \lambda_y = \frac{\sigma^2_y}{\sigma^2_y + \nu^2_{b(y)}}.
\end{equation}
The more uncertain the class-specific estimate $\tilde{a}_y$ is (large $\sigma^2_y$), the more it is pulled toward the group mean $\mu_{b(y)}$.
For classes with many covered examples, $\sigma^2_y$ is small and the estimate stays close to the raw value; for rare classes with few examples, the estimate is regularized toward the group prior.

\section{Method Details}
\label{app:method}

\subsection{Algorithm pseudocode}
\label{app:algorithm}

\begin{algorithm}[h]
\caption{\repair{}: Reranking via Pairwise Residual Correction}
\label{alg:repair}
\begin{algorithmic}[1]
\Require Base scorer $g$, validation set $\{(x_i, y_i)\}_{i=1}^n$, shortlist size $k$
\State \textbf{Training:}
\For{$i = 1, \ldots, n$}
    \State $S_i \gets \text{top-}k$ classes by $g(x_i)$
\EndFor
\State $\mathcal{D} \gets \{(x_i, S_i, y_i) : y_i \in S_i\}$
\State \colorbox{red!8}{$(a^*, \theta^*) \gets \argmax_{a,\theta}\; \mathcal{L}(a, \theta;\, \mathcal{D})$} \hfill $\triangleright$ joint optimization
\For{each class $y$}
    \State $a^*_y \gets (1-\lambda_y)\,a^*_y + \lambda_y\,\mu_{b(y)}$ \hfill $\triangleright$ shrinkage
\EndFor
\State \textbf{Inference:}
\Require Input $x$, learned $(a^*, \theta^*)$
\State $S \gets \text{top-}k$ classes by $g(x)$
\For{$y \in S$}
    \State \colorbox{red!8}{$\ell_y \gets \frac{1}{k-1}\sum_{j \in S \setminus \{y\}} {\theta^*}^\top \phi(x, y, j)$} \hfill $\triangleright$ pairwise correction
    \State $r_y \gets \underbrace{g_y(x)}_{\text{base}} + \underbrace{a^*_y}_{\text{classwise}} + \underbrace{\ell_y}_{\text{pairwise}}$
\EndFor
\State \Return $\argmax_{y \in S}\; r_y$
\end{algorithmic}
\end{algorithm}

\subsection{Pairwise feature definitions}
\label{app:features}

Table~\ref{tab:features} gives the per-dataset pairwise feature specifications.

\begin{table}[h]
\centering
\caption{Pairwise feature vector $\phi(x,y,j)$ per dataset. All datasets share the first three competition features; the domain-specific similarity and log-frequency ratio vary.}
\label{tab:features}
\setlength{\tabcolsep}{4pt}
\renewcommand{\arraystretch}{1.1}
\resizebox{\columnwidth}{!}{\begin{tabular}{l ccccc c}
\toprule
& & & & & & \\[-6pt]
Feature & iNat & ImageNet-LT & Places-LT & GMDB & RareBench & Dim \\
\midrule
Score gap $g_y(x)-g_j(x)$ & \checkmark & \checkmark & \checkmark & \checkmark & \checkmark & 1 \\
Rank gap & \checkmark & \checkmark & \checkmark & \checkmark & \checkmark & 1 \\
Log-prob ratio $\log\frac{P(y|S)}{P(j|S)}$ & \checkmark & \checkmark & \checkmark & \checkmark & \checkmark & 1 \\
Domain similarity & Taxonomic & WordNet & --- & HPO Jaccard & HPO Jaccard & 0--1 \\
Log-freq ratio $\log\frac{n_y+1}{n_j+1}$ & \checkmark & \checkmark & \checkmark & \checkmark & \checkmark & 1 \\
\midrule
Total dimension & 5 & 5 & 4 & 5 & 5 & \\
\bottomrule
\end{tabular}}
\end{table}

The score gap directly encodes the base-score threshold $t(x;\,y,j) = g_j(x)-g_y(x)$ from Theorem~\ref{thm:pairwise-necessary}, allowing the pairwise correction to adapt to the current margin.
The domain similarity feature captures how confusable two classes are independent of the current input: taxonomic distance in the biological hierarchy for iNaturalist, WordNet path similarity for ImageNet-LT, and HPO phenotype Jaccard similarity for the rare-disease benchmarks.
Places-LT uses four features (no domain similarity available for scene categories).

\subsection{Implementation details}
\label{app:implementation}

\paragraph{Optimization.}
All reranker parameters are estimated using L-BFGS-B (\texttt{scipy.optimize.minimize}, maxiter=200--300, ftol=$10^{-11}$).
Wall-clock time is under 30 seconds per dataset per seed on a single CPU.

\paragraph{Hyperparameters.}
Table~\ref{tab:hyperparams} reports the per-dataset settings.
All values were selected by grid search on the calibration set (maximizing Hit@1 among covered examples); the test set was never used for selection.

\begin{table}[h]
\centering
\caption{Hyperparameter settings per dataset.}
\label{tab:hyperparams}
\setlength{\tabcolsep}{5pt}
\renewcommand{\arraystretch}{1.1}
\begin{tabular}{l ccccc}
\toprule
& iNat & ImageNet-LT & Places-LT & GMDB & RareBench \\
\midrule
$\lambda_a$ & 0.01 & 0.001 & 0.001 & 0.5 & 0.005 \\
$\lambda_\theta$ & 0.005 & 0.001 & 0.001 & 0.01 & 5$\times 10^{-5}$ \\
$\alpha$ & 0.3 & 0.0 & 0.3 & 0.3 & 0.0 \\
$\tau$ & 1.0 & 1.0 & 0.0 & 0.0 & 0.0 \\
Features & 5 & 5 & 4 & 5 & 5 \\
Shrinkage groups & 1 & 1 & 1 & 1 & 1 \\
\bottomrule
\end{tabular}
\end{table}

\paragraph{Hyperparameter search grid.}
$\tau \in \{0.0, 0.25, 0.5, 0.75, 1.0\}$;
$\lambda_a \in \{0.0005, 0.001, 0.005, 0.01, 0.05, 0.1, 0.5\}$;
$\lambda_\theta \in \{5{\times}10^{-5}, 10^{-4}, 5{\times}10^{-4}, 10^{-3}, 5{\times}10^{-3}, 10^{-2}\}$;
$\alpha \in \{0.0, 0.1, 0.3, 0.5, 0.7, 0.9\}$.

\paragraph{Base models.}
For iNaturalist, we fine-tune a ResNet-50 (torchvision \texttt{IMAGENET1K\_V2}) on the iNat~2018 training split.
For ImageNet-LT, we use the publicly available ERM ResNet-50 checkpoint from~\citet{kang2020decoupling}.\footnote{\url{https://github.com/facebookresearch/classifier-balancing}}
For Places-LT, we extract frozen ResNet-152 features (ImageNet-pretrained) and train a logistic-regression classifier with class-balanced weighting (cRT).
For GMDB,\footnote{\url{https://db.gestaltmatcher.org}} we select the 508 diseases (out of 528) that have facial phenotype HPO annotations and fine-tune Qwen3.5-0.8B,\footnote{\url{https://huggingface.co/Qwen/Qwen3.5-0.8B}} a multimodal vision-language model, as a 508-way classifier on 7{,}844 facial images; base logits are pre-normalized by classifier weight norms ($g_y \gets g_y / \|w_y\|^{0.5}$) before reranking.
For RareBench,\footnote{\url{https://github.com/chenxz1111/RareBench}} the same GMDB-trained checkpoint is evaluated in a text-only setting: HPO phenotype terms are formatted as a clinical prompt with a blank 224$\times$224 image placeholder, bypassing the visual modality entirely.

\section{Datasets and Evaluation}
\label{app:data}

\subsection{Dataset details}
\label{app:datasets}

\begin{table}[h]
\centering
\caption{Dataset overview.
$K$ = number of classes;
Cal/Test = calibration and test set sizes;
Rare/Freq = class counts under the uniform 80/20 split by training-set frequency.
All rerankers operate on a $k{=}10$ shortlist.}
\label{tab:datasets}
\setlength{\tabcolsep}{4pt}
\renewcommand{\arraystretch}{1.1}
\resizebox{\columnwidth}{!}{\begin{tabular}{l l r rr rr r l}
\toprule
Dataset & Base model & $K$ & Cal & Test & Rare & Freq & Recall@10 & Split source \\
\midrule
\textsc{iNaturalist} & ResNet-50 & 8{,}142 & 30{,}000 & 70{,}000 & 6{,}514 & 1{,}628 & 37.8\% & Original split \\
\textsc{ImageNet-LT} & ResNet-50 & 1{,}000 & 20{,}000 & 30{,}000 & 800 & 200 & 76.7\% & Random (seed=42) \\
\textsc{Places-LT} & ResNet-152 (cRT) & 365 & 10{,}804 & 7{,}446 & 292 & 73 & 75.3\% & Balanced val (seed=42) \\
\textsc{GMDB} & Qwen3.5-0.8B & 508 & 849 & 1{,}724 & 406 & 102 & 51.6\% & Original split \\
\textsc{RareBench} & Qwen3.5-0.8B (text) & 508 & 117 & 78 & --- & --- & 42.3\% & Random (seed=42) \\
\bottomrule
\end{tabular}}
\end{table}

For ImageNet-LT, we use the ERM ResNet-50 checkpoint from~\citet{kang2020decoupling} and split the 50{,}000-image ILSVRC~2012 validation set into 20{,}000 calibration and 30{,}000 test images (seed=42).
For Places-LT, we extract frozen ResNet-152 features and retrain a logistic-regression classifier with class-balanced weighting (cRT); the Places365 balanced validation set (${\sim}100$ images per class, 18{,}250 total) is split 60/40 into calibration and test (stratified, seed=42).
For RareBench, the GMDB-trained Qwen3.5-0.8B checkpoint is applied out-of-distribution to text-only clinical phenotype descriptions: HPO terms are formatted as a clinical prompt with a blank image placeholder, and the model outputs 508-dim logits over the GMDB disease label space.
Of 1{,}122 RareBench cases, 195 map to 67 GMDB diseases; the remaining cases involve diseases outside the GMDB label set.
The rare/frequent split for RareBench follows the GMDB training frequencies over the shared 508-class label space.

Figure~\ref{fig:longtail-dist} shows the training-set class frequency distribution for each benchmark, with the 80/20 rare/frequent boundary marked.
\begin{figure}[h]
\centering
\begin{subfigure}[t]{0.32\textwidth}
    \centering
    \includegraphics[width=\textwidth]{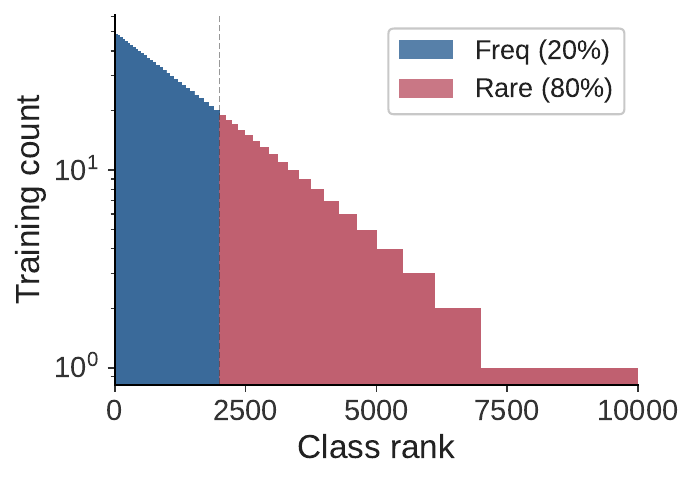}
    \caption{iNaturalist}
\end{subfigure}%
\hfill
\begin{subfigure}[t]{0.32\textwidth}
    \centering
    \includegraphics[width=\textwidth]{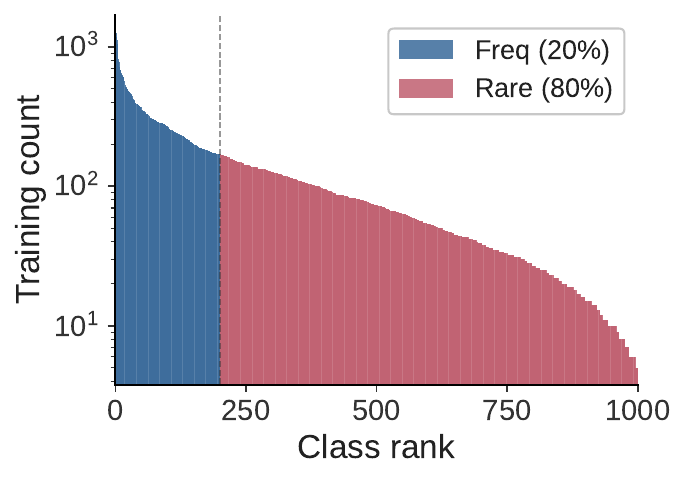}
    \caption{ImageNet-LT}
\end{subfigure}%
\hfill
\begin{subfigure}[t]{0.32\textwidth}
    \centering
    \includegraphics[width=\textwidth]{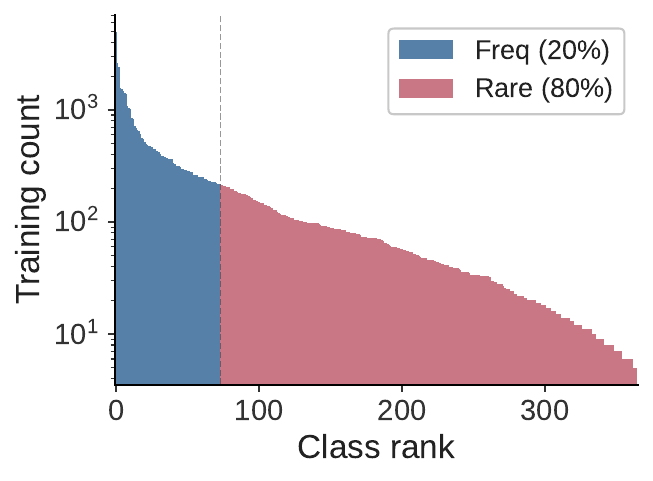}
    \caption{Places-LT}
\end{subfigure}

\vspace{6pt}
\begin{subfigure}[t]{0.32\textwidth}
    \centering
    \includegraphics[width=\textwidth]{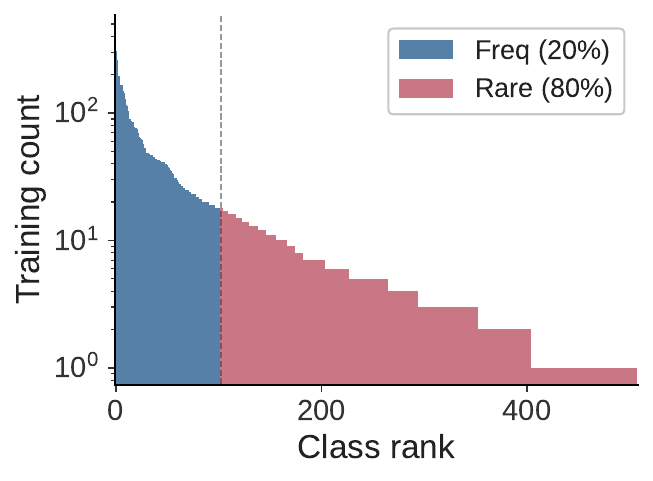}
    \caption{GMDB}
\end{subfigure}%
\hfill
\begin{subfigure}[t]{0.32\textwidth}
    \centering
    \includegraphics[width=\textwidth]{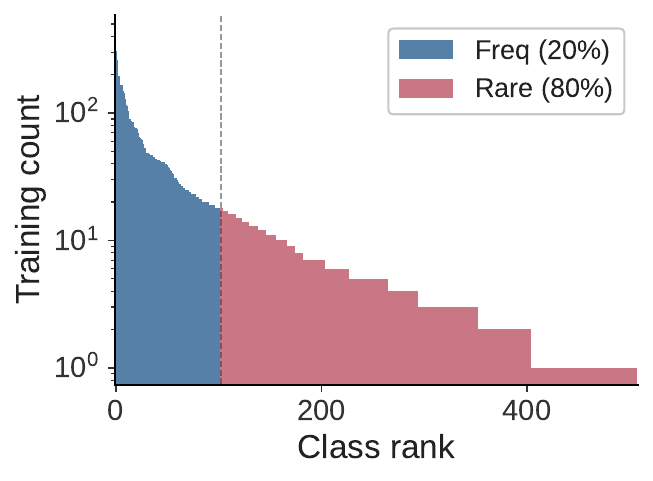}
    \caption{RareBench}
\end{subfigure}%
\hfill
\begin{subfigure}[t]{0.32\textwidth}
    \centering
    \includegraphics[width=\textwidth]{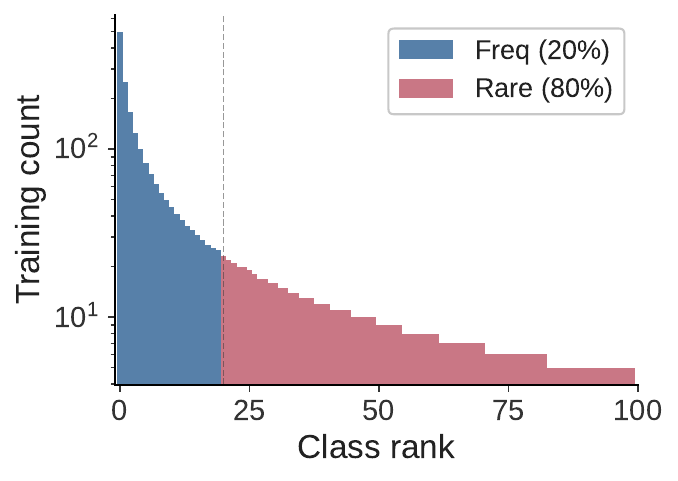}
    \caption{Synthetic}
\end{subfigure}
\caption{Training-set class frequency distributions.
Classes are sorted by frequency (descending); the vertical dashed line marks the 80/20 rare/frequent boundary.}
\label{fig:longtail-dist}
\end{figure}

\subsection{Evaluation metrics}
\label{app:metrics}

\paragraph{Hit@$k$.}
$\mathrm{Hit@}k(\hat{y}) = P(\mathrm{rank}_r(Y,X,S(X)) \leq k \mid C)$.
Hit@1 is coverage-conditioned top-1 accuracy within the reranked shortlist.
Rare Hit@1 and Frequent Hit@1 apply the same definition after restricting to the corresponding class subsets.

\paragraph{Mean Reciprocal Rank (MRR).}
$\mathrm{MRR}(\hat{y}) = \mathbb{E}[\mathrm{rank}_r(Y,X,S(X))^{-1} \mid C]$, where $\mathrm{rank}_r$ is the position of the true label in the reranked shortlist.

\paragraph{Recall@$k$.}
$\mathrm{Recall@}k = P(Y \in S(X))$, the shortlist coverage.
This is a property of the base model, not the reranker, and serves as the attainable ceiling for all reranking methods.

\paragraph{Recoverable gap closure (Eq.~\ref{eq:gap-closed}).}
$\rho_k = 1$ means the reranker closes the entire recoverable gap; $\rho_k = 0$ means no improvement.

\paragraph{Hardest-rival flip rate (Definition~\ref{def:hfr}).}
Conditions on examples where the base model errs, so the metric is not inflated by examples already handled correctly.

\section{Additional Experiments}
\label{app:experiments}

\subsection{Synthetic experiment details}
\label{sec:toy-details}

\paragraph{Data generation.}
We simulate $K=100$ classes with a Zipf prior $\pi_y \propto 1/y$, generating $n_{\mathrm{train}}=5{,}000$ training and $n_{\mathrm{test}}=2{,}000$ test examples.
Each class $y$ has a fixed mean $\mu_y \sim \mathcal{N}(0, 0.5I)$ in $\mathbb{R}^{10}$.
For each example, we sample $Y \sim \mathrm{Categorical}(\pi)$ and $x \sim \mathcal{N}(\mu_Y, I)$.
The base model score is $g_y(x) = \log p(y \mid x) + b_y + \epsilon$, where $b_y = -\tau\log\pi_y \cdot c$ is a class-level bias with corruption factor $c$, and $\epsilon \sim \mathcal{N}(0, 0.09)$.
The shortlist is $S(x)$ = top-10 classes by $g(x)$.

\paragraph{Class-separable regime.}
We set $c = 0.1$, introducing a strong class-level bias without any context-dependent confusion.
The residual $\beta^\star_y(x,S)$ is approximately constant per class up to a shortlist-shared constant.

\paragraph{Non-class-separable regime with contradictory pairs.}
We select 15 confuser pairs $(u,v)$ where $u$ is a tail class and $v$ is a head class.
For each example where both $u$ and $v$ appear in the shortlist, we perturb the base scores: $g_u(x) \mathrel{+}= s \cdot \delta$ and $g_v(x) \mathrel{-}= s \cdot \delta$, where $s \sim \mathrm{Uniform}\{-1,+1\}$ is a per-example context sign and $\delta = 3.0$ is the confusion strength.
This creates contradictory threshold requirements across contexts: $t(x;\,u,v)$ can be large positive in one context and large negative in another, which is exactly the condition of Theorem~\ref{thm:pairwise-necessary}.

\paragraph{Fitting.}
The pairwise feature vector $\phi(x,y,j)$ consists of three features: the base score gap $g_y(x)-g_j(x)$, the rank gap, and the cosine similarity $\cos(\mu_y,\mu_j)$.
We maximize the conditional log-likelihood~(\ref{eq:training-obj}) using L-BFGS with $\lambda_\theta = \lambda_a = 0.001$.
All methods share the same shortlist; only covered examples ($y_i \in S_i$) are used for fitting.
Results are averaged over 5 seeds.

\subsection{Shrinkage diagnostic}
\label{app:shrinkage-diagnostic}

For rare classes with only a handful of covered calibration examples, the unshrunk classwise estimate can have high variance.
Figure~\ref{fig:shrinkage} illustrates the intended behavior of the empirical-Bayes shrinkage in a controlled diagnostic: as calibration examples per class increase, the variance gap between the MLE offsets and the shrunk offsets narrows, while the tail-accuracy benefit of shrinkage is largest in the lowest-data regime.

\begin{figure}[h]
\centering
\includegraphics[width=0.55\textwidth]{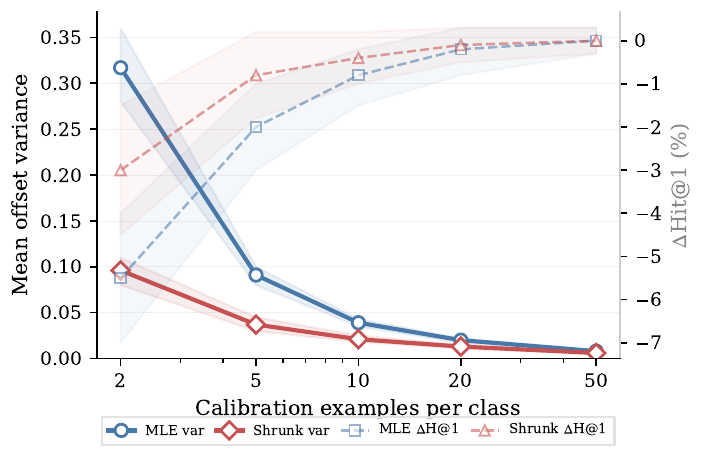}
\caption{Shrinkage diagnostic for the classwise correction.
Left axis: mean offset variance for the raw MLE and the shrunk estimate.
Right axis: $\Delta$Hit@1.
Shrinkage is most helpful when the number of covered calibration examples per class is very small.}
\label{fig:shrinkage}
\end{figure}

\subsection{Component ablation}
\label{app:ablation}

Figure~\ref{fig:ablation-synthetic} shows the synthetic ablation in both regimes. Figure~\ref{fig:ablation-real} extends the analysis to all five real datasets.

\begin{figure}[h]
\centering
\includegraphics[width=0.65\textwidth]{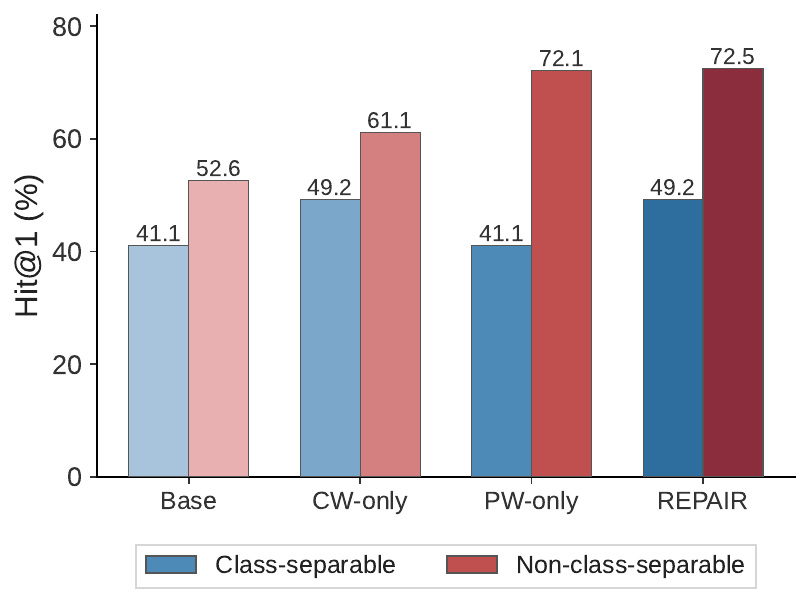}
\caption{Synthetic ablation ($K{=}100$, $k{=}10$).
CW-only: $\theta{=}0$ (classwise correction only).
PW-only: $a{=}0$ (pairwise correction only).
In the class-separable regime, CW-only matches REPAIR;
in the non-class-separable regime, both components contribute.}
\label{fig:ablation-synthetic}
\end{figure}

\begin{figure}[h]
\centering
\begin{subfigure}[t]{0.32\textwidth}
    \centering
    \includegraphics[width=\textwidth]{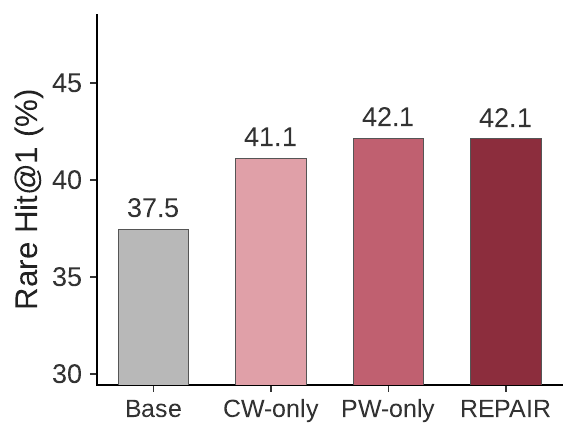}
    \caption{iNaturalist}
\end{subfigure}%
\hfill
\begin{subfigure}[t]{0.32\textwidth}
    \centering
    \includegraphics[width=\textwidth]{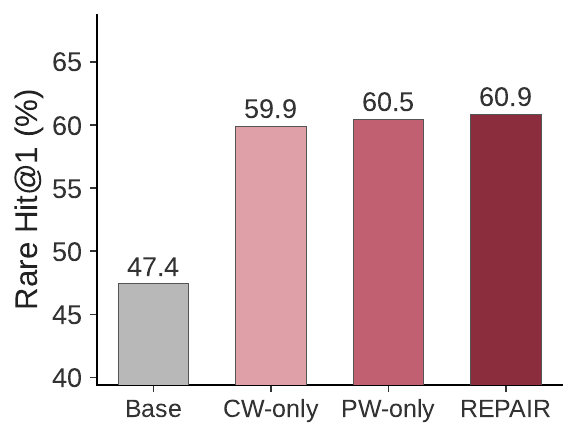}
    \caption{ImageNet-LT}
\end{subfigure}%
\hfill
\begin{subfigure}[t]{0.32\textwidth}
    \centering
    \includegraphics[width=\textwidth]{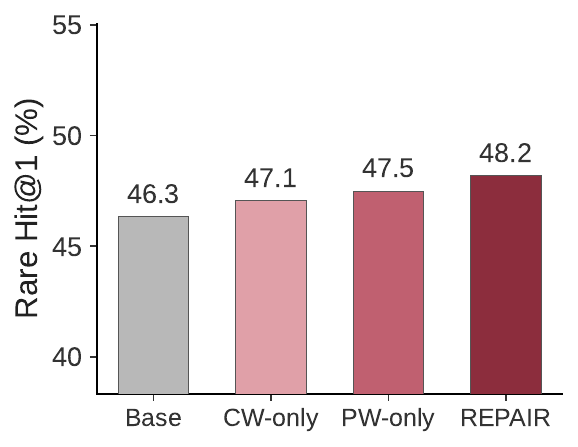}
    \caption{Places-LT}
\end{subfigure}

\vspace{6pt}
\begin{subfigure}[t]{0.32\textwidth}
    \centering
    \includegraphics[width=\textwidth]{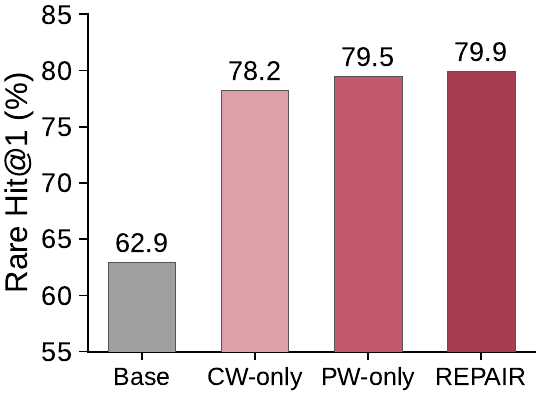}
    \caption{GMDB}
\end{subfigure}%
\hspace{0.02\textwidth}
\begin{subfigure}[t]{0.32\textwidth}
    \centering
    \includegraphics[width=\textwidth]{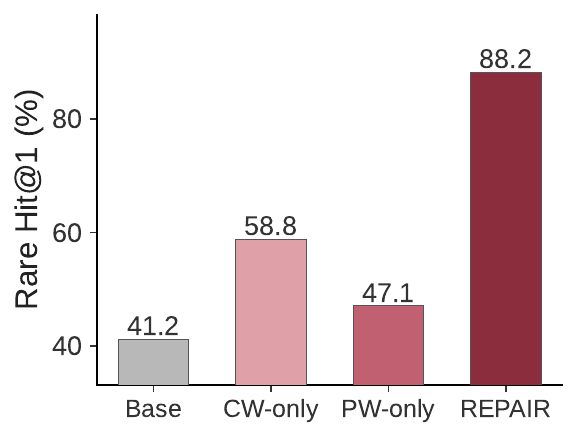}
    \caption{RareBench}
\end{subfigure}
\caption{Component ablation on real datasets (Rare Hit@1). CW-only: $\theta=0$. PW-only: $a=0$. On iNaturalist, PW-only matches \repair{} (42.1); on GMDB, PW-only closely matches \repair{} (79.5 vs.\ 79.9). On ImageNet-LT and Places-LT, both components contribute incrementally. On RareBench, neither component alone approaches the full model (CW-only 58.8, PW-only 47.1, \repair{} 88.2).}
\label{fig:ablation-real}
\end{figure}

\subsection{Real-data quintile analysis}
\label{app:quintile}

Figure~\ref{fig:quintile-real} shows the gain of REPAIR over Classwise stratified by mean per-class threshold dispersion $D_y$ on the four real datasets.

\begin{figure}[h]
\centering
\includegraphics[width=0.85\textwidth]{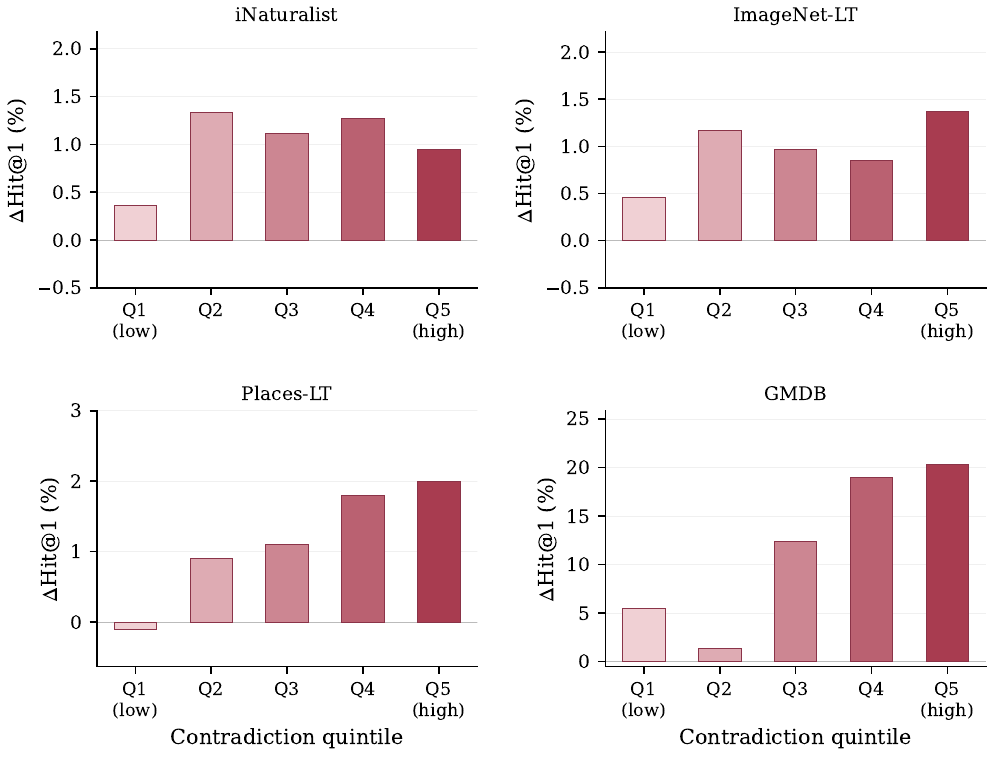}
\caption{$\Delta$Hit@1 of REPAIR over Classwise by mean $D_y$ quintile on real datasets.
Classes are sorted by their mean threshold dispersion and divided into five equal-sized bins.
Q5 (highest dispersion) shows the largest gain on ImageNet-LT, Places-LT, and GMDB.
On iNaturalist, the overall gain is small ($<1.5\%$ per bin) and the quintile gradient is flat, consistent with a near-class-separable residual.}
\label{fig:quintile-real}
\end{figure}

RareBench is omitted because its small number of covered classes yields fewer than 15 classes per quintile, making the stratified analysis unreliable.

\subsection{Sensitivity to shortlist size}
\label{app:k-sensitivity}

Table~\ref{tab:k_sensitivity} evaluates REPAIR across shortlist sizes $k \in \{5, 10, 20, 50\}$. REPAIR consistently outperforms all baselines at every $k$ on every dataset. As $k$ increases, coverage improves but the reranking task becomes harder; the relative advantage of REPAIR over Classwise is stable or grows with $k$.

\begin{table*}[h]
\centering
\caption{Sensitivity to shortlist size $k$. Hit@1 and Rare Hit@1 (\%) conditioned on coverage. Best per dataset-$k$ in $\mathbf{bold}$.}
\label{tab:k_sensitivity}
\vspace{2pt}
\setlength{\tabcolsep}{5pt}
\renewcommand{\arraystretch}{1.08}
\resizebox{\textwidth}{!}{\begin{tabular}{ll cccc cccc}
\toprule
& & \multicolumn{4}{c}{Hit@1 $\uparrow$} & \multicolumn{4}{c}{Rare Hit@1 $\uparrow$} \\
\cmidrule(lr){3-6} \cmidrule(lr){7-10}
Dataset & Method & $k{=}5$ & $k{=}10$ & $k{=}20$ & $k{=}50$ & $k{=}5$ & $k{=}10$ & $k{=}20$ & $k{=}50$ \\
\midrule
Synthetic & Base & $52.6$ & $45.8$ & $42.6$ & $41.2$ & -- & -- & -- & -- \\
(class-sep.) & Classwise & $63.0$ & $54.8$ & $50.3$ & $48.1$ & -- & -- & -- & -- \\
 & REPAIR & $\mathbf{63.0}$ & $\mathbf{54.8}$ & $\mathbf{50.3}$ & $\mathbf{48.1}$ & -- & -- & -- & -- \\
\midrule
Synthetic & Base & $59.0$ & $52.6$ & $49.6$ & $48.4$ & -- & -- & -- & -- \\
(non-class-sep.) & Classwise & $68.1$ & $61.1$ & $57.2$ & $55.2$ & -- & -- & -- & -- \\
 & REPAIR & $\mathbf{79.5}$ & $\mathbf{72.5}$ & $\mathbf{68.3}$ & $\mathbf{66.1}$ & -- & -- & -- & -- \\
\midrule
iNaturalist & Base & $56.1$ & $47.0$ & $40.5$ & $34.5$ & $47.6$ & $37.5$ & $30.6$ & $24.6$ \\
 & LogitAdj & $58.2$ & $49.0$ & $42.3$ & $36.1$ & $52.4$ & $41.7$ & $34.4$ & $27.7$ \\
 & Classwise & $58.1$ & $48.8$ & $42.1$ & $35.9$ & $51.8$ & $41.1$ & $33.7$ & $27.1$ \\
 & REPAIR & $\mathbf{58.4}$ & $\mathbf{49.0}$ & $\mathbf{42.3}$ & $\mathbf{35.9}$ & $\mathbf{52.9}$ & $\mathbf{42.2}$ & $\mathbf{34.7}$ & $\mathbf{27.9}$ \\
\midrule
ImageNet-LT & Base & $61.6$ & $54.4$ & $49.5$ & $45.7$ & $55.2$ & $47.4$ & $42.4$ & $38.5$ \\
 & LogitAdj & $69.2$ & $61.5$ & $56.0$ & $51.7$ & $68.4$ & $59.5$ & $53.2$ & $48.4$ \\
 & Classwise & $69.5$ & $61.8$ & $56.3$ & $51.9$ & $68.9$ & $59.9$ & $53.6$ & $48.7$ \\
 & REPAIR & $\mathbf{69.7}$ & $\mathbf{62.0}$ & $\mathbf{56.4}$ & $\mathbf{52.0}$ & $\mathbf{69.7}$ & $\mathbf{60.8}$ & $\mathbf{54.4}$ & $\mathbf{49.3}$ \\
\midrule
Places-LT & Base & $52.4$ & $44.7$ & $39.5$ & $35.8$ & $54.0$ & $46.3$ & $41.0$ & $37.2$ \\
 & LogitAdj & $52.1$ & $44.5$ & $39.3$ & $35.6$ & $55.8$ & $47.9$ & $42.3$ & $38.5$ \\
 & Classwise & $52.9$ & $45.1$ & $39.9$ & $36.0$ & $54.8$ & $46.9$ & $41.6$ & $37.6$ \\
 & REPAIR & $\mathbf{53.4}$ & $\mathbf{45.5}$ & $\mathbf{39.9}$ & $\mathbf{36.2}$ & $\mathbf{56.2}$ & $\mathbf{48.2}$ & $\mathbf{42.4}$ & $\mathbf{38.5}$ \\
\midrule
GMDB & Base & $75.2$ & $64.0$ & $56.0$ & $45.8$ & $70.5$ & $62.9$ & $53.5$ & $41.1$ \\
 & LogitAdj & $78.7$ & $70.6$ & $58.9$ & $48.1$ & $77.8$ & $78.8$ & $59.6$ & $45.8$ \\
 & Classwise & $78.9$ & $71.4$ & $59.3$ & $48.4$ & $79.2$ & $78.5$ & $60.1$ & $46.9$ \\
 & REPAIR & $\mathbf{79.5}$ & $\mathbf{72.3}$ & $\mathbf{60.2}$ & $\mathbf{49.0}$ & $\mathbf{81.5}$ & $\mathbf{79.6}$ & $\mathbf{63.8}$ & $\mathbf{48.5}$ \\
\midrule
RareBench & Base & $62.1$ & $59.4$ & $50.0$ & $37.5$ & $46.7$ & $41.2$ & $38.9$ & $25.9$ \\
 & LogitAdj & $65.5$ & $62.5$ & $52.8$ & $39.6$ & $53.3$ & $52.9$ & $44.4$ & $29.6$ \\
 & Classwise & $72.4$ & $66.2$ & $72.2$ & $56.2$ & $60.0$ & $54.1$ & $77.8$ & $55.6$ \\
 & REPAIR & $\mathbf{86.2}$ & $\mathbf{85.6}$ & $\mathbf{88.9}$ & $\mathbf{72.9}$ & $\mathbf{80.0}$ & $\mathbf{81.2}$ & $\mathbf{88.9}$ & $\mathbf{70.4}$ \\
\bottomrule
\end{tabular}}
\end{table*}

\subsection{Unconditional results}
\label{app:unconditional}

Table~\ref{tab:unconditional} reports unconditional Hit@1, which factors in coverage: unconditional Hit@1 = conditional Hit@1 $\times$ Recall@$k$ / 100. Since all methods share the same shortlist, unconditional differences mirror the conditional ones.


\begin{table}[h]
\centering
\caption{Unconditional results ($k{=}10$). Unconditional Hit@1 equals conditional Hit@1 scaled by Recall@10. Best per dataset in $\mathbf{bold}$.}
\label{tab:unconditional}
\vspace{2pt}
\setlength{\tabcolsep}{5pt}
\renewcommand{\arraystretch}{1.08}
\begin{tabular}{ll c cc cc}
\toprule
& & & \multicolumn{2}{c}{Hit@1} & \multicolumn{2}{c}{Rare Hit@1} \\
\cmidrule(lr){4-5} \cmidrule(lr){6-7}
Dataset & Method & Recall & Cond. & Uncond. & Cond. & Uncond. \\
\midrule
iNaturalist & Base & $37.8$ & $47.0$ & $17.8$ & $37.5$ & $14.2$ \\
 & LogitAdj & & $49.0$ & $18.5$ & $41.7$ & $15.8$ \\
 & Classwise & & $48.8$ & $18.5$ & $41.1$ & $15.6$ \\
 & REPAIR & & $\mathbf{49.0}$ & $\mathbf{18.5}$ & $\mathbf{42.2}$ & $\mathbf{15.9}$ \\
\midrule
ImageNet-LT & Base & $76.7$ & $54.4$ & $41.7$ & $47.4$ & $36.4$ \\
 & LogitAdj & & $61.5$ & $47.2$ & $59.5$ & $45.6$ \\
 & Classwise & & $61.8$ & $47.4$ & $59.9$ & $45.9$ \\
 & REPAIR & & $\mathbf{62.0}$ & $\mathbf{47.6}$ & $\mathbf{60.8}$ & $\mathbf{46.6}$ \\
\midrule
Places-LT & Base & $75.3$ & $44.7$ & $33.7$ & $46.3$ & $34.9$ \\
 & LogitAdj & & $44.5$ & $33.5$ & $47.9$ & $36.1$ \\
 & Classwise & & $45.1$ & $34.0$ & $46.9$ & $35.3$ \\
 & REPAIR & & $\mathbf{45.5}$ & $\mathbf{34.3}$ & $\mathbf{48.2}$ & $\mathbf{36.3}$ \\
\midrule
GMDB & Base & $51.6$ & $64.0$ & $33.0$ & $62.9$ & $32.5$ \\
 & LogitAdj & & $70.6$ & $36.4$ & $78.8$ & $40.7$ \\
 & Classwise & & $71.4$ & $36.8$ & $78.5$ & $40.5$ \\
 & REPAIR & & $\mathbf{72.3}$ & $\mathbf{37.3}$ & $\mathbf{79.6}$ & $\mathbf{41.1}$ \\
\midrule
RareBench & Base & $42.3$ & $59.4$ & $25.1$ & $41.2$ & $17.4$ \\
 & LogitAdj & & $62.5$ & $26.4$ & $52.9$ & $22.4$ \\
 & Classwise & & $66.2$ & $28.0$ & $54.1$ & $22.9$ \\
 & REPAIR & & $\mathbf{85.6}$ & $\mathbf{36.2}$ & $\mathbf{81.2}$ & $\mathbf{34.3}$ \\
\bottomrule
\end{tabular}
\end{table}
\subsection{Sign test results}
\label{app:signtest}

Table~\ref{tab:signtest} reports the sign test results for REPAIR vs.\ Classwise across all dataset--metric pairs.
Each cell shows the number of trials (out of 5) in which REPAIR strictly outperforms Classwise.

\begin{table}[h]
\centering
\caption{Sign test results (REPAIR vs.\ Classwise, 5 calibration subsamples).
$\geq$4/5 wins = significant ($\checkmark$).}
\label{tab:signtest}
\setlength{\tabcolsep}{5pt}
\renewcommand{\arraystretch}{1.1}
\begin{tabular}{l ccccc}
\toprule
Metric & iNat & ImageNet-LT & Places-LT & GMDB & RareBench \\
\midrule
Hit@1 & 5/5 \checkmark & 5/5 \checkmark & 5/5 \checkmark & 5/5 \checkmark & 4/5 \checkmark \\
Hit@3 & 5/5 \checkmark & 4/5 \checkmark & 5/5 \checkmark & 5/5 \checkmark & 5/5 \checkmark \\
MRR & 5/5 \checkmark & 4/5 \checkmark & 5/5 \checkmark & 5/5 \checkmark & 5/5 \checkmark \\
Rare Hit@1 & 5/5 \checkmark & 5/5 \checkmark & 5/5 \checkmark & 5/5 \checkmark & 4/5 \checkmark \\
Freq Hit@1 & 0/5 & 0/5 & 0/5 & 0/5 & 5/5 \checkmark \\
HFR & 5/5 \checkmark & 5/5 \checkmark & 5/5 \checkmark & 5/5 \checkmark & 3/5 \\
\bottomrule
\end{tabular}
\end{table}

REPAIR significantly outperforms Classwise on Hit@1, Hit@3, MRR, and Rare Hit@1 across all five datasets (20/20 pass at $\geq$4/5).
HFR passes on four of five datasets; RareBench (3/5) is limited by its small covered test set (32 examples).
Frequent Hit@1 passes only on RareBench, where pairwise competition benefits both strata; on the remaining datasets the pairwise term introduces slight noise on frequent classes, as discussed in \S\ref{sec:results}.

\subsection{RareBench bootstrap confidence intervals}
\label{app:rarebench-ci}

Because RareBench has only 32 covered test examples, we report bootstrap 95\% confidence intervals (10{,}000 resamples) to assess the reliability of the observed gains.

\begin{table}[h]
\centering
\caption{RareBench bootstrap 95\% CI ($n{=}32$, 10K resamples). Even at the lower bound of the 95\% CI, REPAIR outperforms Classwise by $+6.2$\,\%.}
\label{tab:rarebench-ci}
\vspace{2pt}
\setlength{\tabcolsep}{6pt}
\renewcommand{\arraystretch}{1.08}
\begin{tabular}{l c c}
\toprule
Method & Hit@1 (\%) & 95\% CI \\
\midrule
Base & $59.4$ & $[40.6,\; 75.0]$ \\
Classwise & $66.2$ & $[50.5,\; 81.8]$ \\
REPAIR & $\mathbf{85.6}$ & $[73.1,\; 95.0]$ \\
\midrule
REPAIR $-$ Classwise & $+19.4$ & $[+6.2,\; +31.2]$ \\
\bottomrule
\end{tabular}
\end{table}

The probability that REPAIR outperforms Classwise under resampling is $P = 0.999$. On rare classes specifically, the gap is $+27.1\%$ with $95\%$ CI $[+11.8, +52.9]$ ($P = 0.998$). Despite the small sample size, the improvement is statistically robust.


\subsection{Pure-text NLP benchmarks}
\label{app:nlp}

To test whether the residual decomposition holds beyond vision and clinical
settings, we evaluate on four pure-text NLP benchmarks: \textsc{GoEmotions}
(emotion classification, 28 classes), \textsc{MASSIVE} (intent classification,
60 classes), \textsc{HuffPost-LT} (news-topic classification, 42 classes), and
\textsc{Few-NERD} (entity-type classification, 66 classes). All four use a
Qwen3.5-2B base classifier fine-tuned with LoRA, and follow the same
calibration/test protocol as the main experiments: reranker parameters are
estimated on a held-out calibration split and evaluated on a disjoint test split,
hyperparameters are selected on the calibration split by maximizing Hit@1 among
covered examples, and all metrics are conditioned on coverage ($Y \in S$). We
compare Base, LogitAdj, $\tau$-norm, Classwise, and REPAIR under the identical
protocol, with Classwise and REPAIR reporting mean$\pm$std over 5 calibration
subsamples.

Table~\ref{tab:main_nlp} reports the full comparison. The class-separable /
non-class-separable distinction studied in the main text reproduces in pure-text
LM classifiers. On \textsc{GoEmotions}, the residual is class-separable: Classwise
already captures nearly all of the recoverable correction, and REPAIR adds only
$+0.05$ Hit@1 and $+0.51$ Rare Hit@1 over it. On \textsc{MASSIVE},
\textsc{HuffPost-LT}, and \textsc{Few-NERD}, the residual is non-class-separable,
and the pairwise term yields gains a fixed classwise offset cannot: $+1.78$,
$+0.84$, and $+1.05$ Hit@1 over Classwise, with Rare Hit@1 gains of $+5.53$,
$+2.04$, and $+1.83$, respectively. On \textsc{Few-NERD}, LogitAdj improves Rare
Hit@1 only modestly over Base ($18.60 \to 19.90$), Classwise gives a larger
improvement ($\to 29.75$), and REPAIR improves it further ($\to 31.58$),
indicating that a learned classwise correction helps but the pairwise term still
captures additional label-competition structure. This is consistent with the
framework's prediction: pairwise correction helps precisely where the classwise
component is insufficient.

\begin{table*}[t]
\centering
\caption{Reranking results on four pure-text NLP benchmarks (shortlist size 10),
all using a Qwen3.5-2B base classifier fine-tuned with LoRA. All metrics are
conditioned on the true label appearing in the shortlist; arrows indicate higher
is better. Rare and Frequent denote the bottom $80\%$ and top $20\%$ of classes
by training-set frequency. Both Classwise and \repair{} report
mean{\scriptsize$\pm$std} over 5 calibration subsamples. Best per dataset in
\textbf{bold}. Row shading: \colorbox{blue!6}{blue} = class-separable,
\colorbox{red!8}{red} = non-class-separable.}
\label{tab:main_nlp}
\vspace{2pt}
\setlength{\tabcolsep}{5pt}
\renewcommand{\arraystretch}{1.10}
\resizebox{\textwidth}{!}{\begin{tabular}{l l cc cc c}
\toprule
& & \multicolumn{2}{c}{\textit{Overall}} & \multicolumn{2}{c}{\textit{Stratified Hit@1}} & \\
\cmidrule(lr){3-4} \cmidrule(lr){5-6}
Dataset & Method & Hit@1$\uparrow$ & Hit@3$\uparrow$ & Rare$\uparrow$ & Freq.$\uparrow$ & HFR$\uparrow$ \\
\midrule
\multirow{5}{*}{\shortstack[l]{\textsc{GoEmotions}\\[-1pt]{\scriptsize(28 cls)}}}
 & Base        & $58.10$ & $81.23$ & $26.90$ & $74.97$ & --    \\
 & LogitAdj    & $58.48${\tiny$\pm.05$} & $\mathbf{81.58}${\tiny$\pm.16$} & $29.97${\tiny$\pm.30$} & $73.89${\tiny$\pm.12$} & $.042$ \\
 & $\tau$-norm & $58.27${\tiny$\pm.00$} & $81.07${\tiny$\pm.00$} & $27.21${\tiny$\pm.00$} & $\mathbf{75.08}${\tiny$\pm.00$} & $.017$ \\
 & Classwise   & $58.82${\tiny$\pm.07$} & $81.54${\tiny$\pm.13$} & $34.94${\tiny$\pm.81$} & $71.93${\tiny$\pm.42$} & $.093$ \\
 & \cellcolor{blue!10} REPAIR (Ours)
   & \cellcolor{blue!10} $\mathbf{58.87}${\tiny$\pm.39$}
   & \cellcolor{blue!10} $81.53${\tiny$\pm.07$}
   & \cellcolor{blue!10} $\mathbf{35.45}${\tiny$\pm2.95$}
   & \cellcolor{blue!10} $71.92${\tiny$\pm.89$}
   & \cellcolor{blue!10} $\mathbf{.139}$ \\
\midrule
\multirow{5}{*}{\shortstack[l]{\textsc{MASSIVE}\\[-1pt]{\scriptsize(60 cls)}}}
 & Base        & $55.26$ & $77.65$ & $32.06$ & $68.55$ & --    \\
 & LogitAdj    & $56.39${\tiny$\pm.00$} & $78.86${\tiny$\pm.00$} & $40.07${\tiny$\pm.00$} & $68.55${\tiny$\pm.00$} & $.133$ \\
 & $\tau$-norm & $55.26${\tiny$\pm.00$} & $77.65${\tiny$\pm.00$} & $32.06${\tiny$\pm.00$} & $68.55${\tiny$\pm.00$} & $.000$ \\
 & Classwise   & $56.65${\tiny$\pm.16$} & $78.71${\tiny$\pm.33$} & $40.26${\tiny$\pm.55$} & $67.70${\tiny$\pm.19$} & $.122$ \\
 & \cellcolor{red!10} REPAIR (Ours)
   & \cellcolor{red!10} $\mathbf{58.43}${\tiny$\pm.26$}
   & \cellcolor{red!10} $\mathbf{80.79}${\tiny$\pm.20$}
   & \cellcolor{red!10} $\mathbf{45.79}${\tiny$\pm.62$}
   & \cellcolor{red!10} $\mathbf{71.30}${\tiny$\pm.61$}
   & \cellcolor{red!10} $\mathbf{.217}$ \\
\midrule
\multirow{5}{*}{\shortstack[l]{\textsc{HuffPost-LT}\\[-1pt]{\scriptsize(42 cls)}}}
 & Base        & $66.51$ & $86.87$ & $41.15$ & $\mathbf{81.90}$ & --    \\
 & LogitAdj    & $66.62${\tiny$\pm.04$} & $87.19${\tiny$\pm.06$} & $42.61${\tiny$\pm.11$} & $81.20${\tiny$\pm.08$} & $.026$ \\
 & $\tau$-norm & $66.47${\tiny$\pm.08$} & $86.82${\tiny$\pm.10$} & $41.12${\tiny$\pm.05$} & $81.85${\tiny$\pm.10$} & $.002$ \\
 & Classwise   & $66.14${\tiny$\pm.15$} & $86.58${\tiny$\pm.14$} & $43.39${\tiny$\pm1.69$} & $80.60${\tiny$\pm.84$} & $.060$ \\
 & \cellcolor{red!10} REPAIR (Ours)
   & \cellcolor{red!10} $\mathbf{66.98}${\tiny$\pm.19$}
   & \cellcolor{red!10} $\mathbf{87.53}${\tiny$\pm.18$}
   & \cellcolor{red!10} $\mathbf{45.43}${\tiny$\pm2.33$}
   & \cellcolor{red!10} $80.07${\tiny$\pm1.14$}
   & \cellcolor{red!10} $\mathbf{.062}$ \\
\midrule
\multirow{5}{*}{\shortstack[l]{\textsc{Few-NERD}\\[-1pt]{\scriptsize(66 cls)}}}
 & Base        & $37.13$ & $65.48$ & $18.60$ & $39.63$ & --    \\
 & LogitAdj    & $38.14${\tiny$\pm.00$} & $67.34${\tiny$\pm.00$} & $19.90${\tiny$\pm.00$} & $40.43${\tiny$\pm.00$} & $.132$ \\
 & $\tau$-norm & $37.06${\tiny$\pm.03$} & $65.30${\tiny$\pm.09$} & $18.42${\tiny$\pm.09$} & $39.57${\tiny$\pm.03$} & $.001$ \\
 & Classwise   & $38.64${\tiny$\pm.03$} & $68.82${\tiny$\pm.14$} & $29.75${\tiny$\pm.83$} & $39.84${\tiny$\pm.09$} & $.050$ \\
 & \cellcolor{red!10} REPAIR (Ours)
   & \cellcolor{red!10} $\mathbf{39.69}${\tiny$\pm.03$}
   & \cellcolor{red!10} $\mathbf{70.11}${\tiny$\pm.21$}
   & \cellcolor{red!10} $\mathbf{31.58}${\tiny$\pm1.08$}
   & \cellcolor{red!10} $\mathbf{40.78}${\tiny$\pm.13$}
   & \cellcolor{red!10} $\mathbf{.137}$ \\
\bottomrule
\end{tabular}}
\end{table*}

\subsection{HPO ablation on GMDB}
\label{app:hpo-ablation}

A natural concern is whether REPAIR's gains in the rare-disease setting come from
the HPO phenotype-similarity feature, an external source of medical knowledge,
rather than from the pairwise correction itself. To test this, we remove the HPO
Jaccard feature from $\phi(x,y,j)$ on GMDB while keeping the same calibration
protocol and five calibration seeds. Table~\ref{tab:hpo-ablation} reports the
result. Removing HPO changes every REPAIR metric by less than $0.1$, well within
seed standard deviation, so the aggregate gain is not driven by HPO specifically:
the model-derived competition features (score gap, rank gap, log-probability
ratio, log-frequency ratio) carry most of it. HPO remains useful qualitatively by
shaping the composition of the top-$k$ differential diagnosis list, as illustrated
in Appendix~\ref{app:casestudy}.

\begin{table}[h]
\centering
\caption{GMDB ablation removing the HPO Jaccard feature (mean$\pm$std over 5
calibration subsamples). This ablation uses a separate calibration split, so
absolute Hit@1 differs slightly from Table~\ref{tab:main_combined}; the relevant
comparison is REPAIR with HPO versus without HPO under the same split.}
\label{tab:hpo-ablation}
\vspace{2pt}
\setlength{\tabcolsep}{8pt}
\renewcommand{\arraystretch}{1.10}
\begin{tabular}{l cccc}
\toprule
Method & Hit@1$\uparrow$ & Rare$\uparrow$ & Freq.$\uparrow$ & HFR$\uparrow$ \\
\midrule
Classwise            & $67.25${\tiny$\pm.68$} & $66.61${\tiny$\pm.74$} & $67.69${\tiny$\pm.92$} & $.301$ \\
REPAIR with HPO      & $70.55${\tiny$\pm.57$} & $76.12${\tiny$\pm.39$} & $66.78${\tiny$\pm1.11$} & $.466$ \\
REPAIR without HPO   & $70.50${\tiny$\pm.53$} & $76.06${\tiny$\pm.43$} & $66.75${\tiny$\pm1.07$} & $.466$ \\
\bottomrule
\end{tabular}
\end{table}

\subsection{GMDB case study: the qualitative role of HPO}
\label{app:casestudy}

The HPO ablation in Appendix~\ref{app:hpo-ablation} shows that HPO is not the main
source of REPAIR's aggregate Hit@1 gain. Its contribution is instead qualitative:
it shapes the composition of the top-$k$ differential diagnosis list. We
illustrate this with one held-out GMDB patient.

The patient presents with long palpebral fissures with eversion of the lateral
lower eyelid, arched eyebrows, large protruding earlobes, mild intellectual
disability, congenital heart defect, and short stature. The correct diagnosis is
Kabuki syndrome. In the base Qwen3.5-0.8B shortlist, the clinically related CHARGE
syndrome appears at rank~5 while the correct Kabuki syndrome appears at rank~8, so
the case is recoverable but initially misranked. The two syndromes share
overlapping facial, cardiac, hearing-related, and developmental phenotypes and are
often part of the same clinical differential.

Table~\ref{tab:casestudy} shows the selected ranks after reranking. A fixed
frequency-based offset (LogitAdj) can lift Kabuki syndrome to rank~1, but it
pushes CHARGE syndrome down to rank~7: it corrects the top-1 label while
disrupting a clinically meaningful differential. REPAIR with HPO similarity also
promotes Kabuki syndrome to rank~1, but keeps CHARGE syndrome at rank~2. Because
Kabuki and CHARGE share relevant phenotype terms, the HPO-aware pairwise term
gives their pair a stronger positive correction than unrelated rare diseases
receive. Thus the pairwise HPO term does not merely improve top-1 accuracy; it
changes the structure of the top-$k$ list, lifting the correct diagnosis while
preserving a phenotypically related alternative near the top.

\begin{table}[h]
\centering
\caption{Selected ranks for one held-out GMDB patient (correct diagnosis: Kabuki
syndrome). CHARGE syndrome is a clinically related differential. LogitAdj corrects
the top-1 prediction but pushes the related diagnosis down the list, whereas
REPAIR with HPO similarity corrects the top-1 prediction while keeping the related
diagnosis near the top.}
\label{tab:casestudy}
\vspace{2pt}
\setlength{\tabcolsep}{8pt}
\renewcommand{\arraystretch}{1.15}
\begin{tabular}{l cc}
\toprule
Method & Kabuki (correct) & CHARGE (related) \\
\midrule
Base Qwen3.5-0.8B          & Rank 8 & Rank 5 \\
LogitAdj                   & Rank 1 & Rank 7 \\
REPAIR with HPO similarity & Rank 1 & Rank 2 \\
\bottomrule
\end{tabular}
\end{table}
\section{Limitations}
\label{app:limitations}

\paragraph{Domain-specific features.}
The pairwise feature vector includes a domain-specific similarity component (taxonomic distance, WordNet similarity, or HPO Jaccard) on four of five datasets. When such structured knowledge is unavailable, as on Places-LT, we fall back to a four-dimensional feature vector without domain similarity and still observe gains. A natural alternative in the absence of external knowledge is cosine similarity between the base model's class embeddings, which we leave for future work.

\paragraph{Shortlist coverage ceiling.}
REPAIR operates on a fixed top-$k$ shortlist produced by the base model and cannot recover examples where the true label falls outside the shortlist. This is a fundamental constraint shared by all post-hoc rerankers, not specific to our method. Our framework explicitly conditions on coverage ($Y \in S$) and focuses on maximizing ranking quality within this recoverable set. Improving coverage itself requires modifying the base model, which is orthogonal to the reranking problem we study.